\documentclass{article}





 \PassOptionsToPackage{numbers, compress}{natbib}
 \usepackage[preprint]{neurips_2020}
\usepackage{wrapfig}
\usepackage{caption}
\usepackage{microtype}
\usepackage{graphicx}
\usepackage{subcaption}
\usepackage[utf8]{inputenc} 
\usepackage[T1]{fontenc}    
\usepackage{hyperref}       
\usepackage{url}            
\usepackage{booktabs}       
\usepackage{amsfonts}       
\usepackage{nicefrac}       
\usepackage{microtype}      
\usepackage{algorithmicx,algpseudocode,algorithm}
\usepackage{color-edits}
\usepackage{enumitem}
\usepackage{amsmath,amsthm,amssymb}
\usepackage{cleveref}

\newtheorem{theorem}{Theorem}
\newtheorem{lemma}[]{Lemma}
\newtheorem{proposition}[]{Proposition}
\newtheorem{remark}[]{Remark}
\newtheorem{corollary}[]{Corollary}
\newtheorem{definition}{Definition}

\newtheorem{assumption}{Assumption}

\DeclareMathOperator{\vol}{vol}
\newcommand{\ps}{p_{\sigma^2}}
\newcommand{\gs}{g_{\sigma^2}}
\newcommand{\inprod}[2]{\left\langle #1, #2 \right\rangle}

\DeclareMathOperator{\ric}{Ric}
\DeclareMathOperator{\ent}{Ent}
\DeclareMathOperator{\trace}{Tr}
\newcommand{\bfx}{\mathbf{x}}
\newcommand{\bfz}{\mathbf{z}}
\newcommand{\mcal}{\mathcal}
\newcommand{\bfs}{\mathbf{s}}

\newcommand{\bftheta}{{\boldsymbol{\theta}}}

\hypersetup{
	colorlinks=true,
	linkcolor=blue,
	filecolor=blue,      
	urlcolor=blue,
	citecolor=blue
}
\addauthor{sasha}{red}

\title{Fast Mixing of Multi-Scale Langevin Dynamics under the Manifold Hypothesis}

%

\author{
  Adam Block \\
  Department of Mathematics\\
  MIT
   \And
   Youssef Mroueh \\
  IBM Research\\
  MIT-IBM Watson AI Lab
   \AND
  Alexander Rakhlin \\
   MIT \\
   MIT-IBM Watson AI Lab
   \And
   Jerret Ross \\
  IBM Research\\
  MIT-IBM Watson AI Lab
}

\begin{document}

\maketitle

\begin{abstract}
    Recently, the task of image generation has attracted much attention. In particular, the recent empirical successes of the Markov Chain Monte Carlo (MCMC) technique of Langevin Dynamics have prompted a number of theoretical advances; despite this, several outstanding problems remain. First, the Langevin Dynamics is run in very high dimension on a nonconvex landscape; in the worst case, due to the NP-hardness of nonconvex optimization, it is thought that Langevin Dynamics mixes only in time exponential in the dimension. In this work, we demonstrate how the manifold hypothesis allows for the considerable reduction of mixing time, from exponential in the ambient dimension to depending only on the (much smaller) intrinsic dimension of the data. Second, the high dimension of the sampling space significantly hurts the performance of Langevin Dynamics; we leverage a multi-scale approach to help ameliorate this issue and observe that this multi-resolution algorithm allows for a trade-off between image quality and computational expense in generation. 
\end{abstract}

\section{Introduction}
    Generative modeling has recently inspired great interest, especially with the empirical successes in everything from text generation to protein modeling \cite{Ingraham2018,Song2017}.  Of particular interest is the task of image generation, where a number of approaches over the past several years have produced increasingly sharp computer-generated images.  While much of the state of the art work has been with Generative Adversarial Networks (GANs) \cite{Goodfellow2014}, the difficulty in training and in interpretability, the relative lack of theory, and the desire for a likelihood-based approach have led to great interest in Markov Chain Monte Carlo (MCMC) methods for image generation.
    
    The MCMC method has been particularly fruitful, with such work as \cite{SongErmon2019,Nguyen2017} generating high-quality, clear, and diverse images.  The empirical success of such models is curious, as it diverges sharply from what might be expected given the current theory.  The motivation for such methods is that the algorithm is sampling from a Markov Chain that is converging quickly to a stationary distribution close to that of the population; the rate of convergence, as discussed below, is governed by a parameter of the population distribution called the log-Sobolev constant.  While it is classical at this point that certain distributions in high dimension, such as log-concave measures \cite{Emery1985}, have small log-Sobolev constants, guaranteeing fast mixing, in the typical case  these constants tend to grow exponentially with the dimension of the space on which the measure is supported.  In the context of image generation, where each pixel corresponds to a different dimension and where there is no convexity to accelerate mixing, one would expect the mixing to be so slow as to be prohibitive; however, the empirical results demonstrate that fast mixing is achieved.  Other methods for reducing the dimensional dependence of mixing times include making assumptions on boundedness \cite{Bardet2018}, symmetry \cite{Risteski2020}, or that the distribution is a mixture of log-concave densities \cite{Ge2018}; like the convexity assumption above, there is no reason to suppose that image data fit any of these criteria.  Thus, we have an enigma: why can Langevin dynamics generate sharp and natural images?

    While there is a plethora of theoretical analyses of Langevin Dynamics, we hew most closely to the regime studied in \cite{DAESGLD}, where we must both learn the density of the stationary distribution and run the MCMC method for learning.  As noted in that work, this extra learning step and the consequent lack of the exact population distribution, creates a trade-off when running LD: if the MCMC is not run for enough steps then the Markov chain does not mix, but if the algorithm is run for too many steps then it starts to diverge from the target distribution due to the estimate being used instead of the population distribution.  Indeed, as observed in \cite{SongErmon2019} and in our own experiments, if the LD is run for too many steps, this divergence that appears in the theory indeed shows up in practice.  This extra layer of complexity makes it even more important for the mixing time to be small, as, in reality, the algorithm cannot be run indefinitely.  Thus the empirical results are clear: Langevin dynamics applied to image generation mixes quickly, despite the high dimension and lack of convexity.
    
     We propose the manifold hypothesis as an explanation.  It has long been thought that certain high-dimensional data with complicated structure can be learned because they lie on or near a much lower-dimensional manifold embedded into the ambient space \cite{Fefferman2016}, while the empirical success of GANs, which use relatively low-dimensional Gaussians to produce high resolution images \cite{Zhang2020,Berthelot2020} provides strong evidence for the applicability of this model to image generation.  In this paradigm, the relevant measure of complexity is the \emph{intrinsic} dimension of the data rather than any \emph{extrinsic} features.  This hypothesis is particularly well suited to highly structured data where the data has complicated associations between its coordinates; images provide an excellent example.  Using techniques from Riemannian geometry, developed in the supplement, we show that under the manifold hypothesis, the mixing time of the Langevin dynamics used for image generation depends only on this intrinsic dimension, which is much smaller than the apparently very high dimension of the pixel space.
     
     The structure of the paper is as follows: in \Cref{sec2} we set up the mathematical framework required to state the results; in \Cref{sec3} we state the main results regarding the fast mixing time given the manifold hypothesis and quite general geometric assumptions; in \Cref{mrl} we suggest a modified algorithm that allows for a clearer understanding of the trade-off between runtime and image quality; finally, in \Cref{experiments} we use the algorithm suggested in \Cref{mrl} to provide compelling experimental evidence in favor of the  manifold hypothesis. All proofs are deferred to the supplement.
     
\section{Setup and Preliminaries}\label{sec2}
    \begin{wrapfigure}[14]{R}{0.55\textwidth}
\vspace{-2.6em}
\begin{minipage}{0.55\textwidth}
\begin{algorithm}[H]
	\caption{Annealed Langevin Dynamics \cite{SongErmon2019}}
	\label{alg:annealLD}
	\begin{algorithmic}[1]
	    \Require{$\{(\sigma_i, T_i)\}_{i=1}^{L}, \epsilon,  \bfs_\bftheta(\tilde{\bfx}, \sigma_i)$}\\ ($\bfs_\bftheta$ are score estimators)
	   \State{Initialize $\tilde{\bfx}_0 \sim \mathcal{N}(0,I)$}
	    \For{$i \gets 1$ to $L$}
	        \State{$\alpha_i \gets \epsilon \cdot \sigma_i^2/\sigma_{L^2}$} 
            \For{$t \gets 1$ to $T_i$}
                \State{Draw $\bfz_t \sim \mcal{N}(0, I)$}
                \State $\tilde{\bfx}_{t} \gets \tilde{\bfx}_{t-1} + \dfrac{\alpha_i}{2} \bfs_\bftheta(\tilde{\bfx}_{t-1}, \sigma_i) + \sqrt{\alpha_i}~ \bfz_t$
            \EndFor
        \EndFor\\
        \Return{$\tilde{\bfx}_{T_L}$}
	\end{algorithmic}
\end{algorithm}
\end{minipage}
\end{wrapfigure}
    We recall the necessary definitions and results that will be used in the sequel.  We let $\mu$ be a fixed reference measure and let $p = p d \mu$ be absolutely continuous to $\mu$.  If $p$ is differentiable, we define the \textbf{score} of $p$ as $\nabla \log p$.  In reality, we do not have access to the score of the population and instead must estimate it.  The technique used in \cite{SongErmon2019,DAESGLD} and applied in the sequel is one of De-Noising Score Matching (\cite{Vincent2011}).  Let $g_{\sigma^2}$ denote the density of a centred Gaussian in $\mathbb{R}^d$ covariance matrix $\sigma^2 I$.  We denote by $p_{\sigma^2} = p \ast g_{\sigma^2}$ the convolution of $p$ and $g_{\sigma^2}$.  The population and empirical denoising score matching (DSM) losses are defined respectively  for $X_i\sim p$ and $\xi_i\sim \mathcal{N}(0,I_d)$:
        \begin{equation*}
            \mathcal{L}_{DSM}(s) = \mathbb{E}_{Y \sim \ps} \left[||s(Y) - \nabla \log \ps(Y)||^2  \right] \quad \widehat{L}_{DSM}(s) = \frac 1n \sum_{i = 1}^n \left|\left| s(X_i+\sigma \xi_i ) + \frac{1}{\sigma}\xi_i \right|\right|^2
        \end{equation*}
    The empirical loss, minimized at $\mathbf{s}_\theta$, is known to consistently estimate the population loss \cite{Vincent2011,hyvarinen2005}.  Note that the smoothing properties of the Gaussian imply that $\ps$ always has a score with respect to the Lebesgue measure on $\mathbb{R}^d$, the first of many key advantages of the De-Noising approach.
    
    The foundation of the sampling scheme, proposed in \cite{SongErmon2019} and analyzed in \cite{DAESGLD} is annealed Langevin sampling, detailed in \Cref{alg:annealLD}.

    We interpret the Langevin algorithm as a discrete approximation of a continuous diffusion.  In order to establish notation, in the sequel, we denote by $X_t, \widehat{X}_t$  the diffusions started at some $X_0 \in \mathbb{R}^d$ governed by
    \begin{align}\label{eq1}
        d X_t = \nabla \log p_{\sigma^2}(X_t) d t + \sqrt{2} d B_t && d \widehat{X}_t = \mathbf{s}_\theta(\widehat{X}_t) d t + \sqrt{2} d B_t
    \end{align}
    where $B_t$ is a standard $d$-dimensional Brownian motion and $\mathbf{s}_\theta$ is an estimate of $\nabla \log \ps$.  We denote by $\nu_t$ the law of $X_t$ and $\widehat{\nu}_t$ the law of $\widehat{X}_t$.  Note that, under quite general conditions, $\nu_t$ converges to $\ps$ (\cite{Bakry2014}).  In order to quantify such convergence, we consider Wasserstein distance.
    \begin{definition}
        Let $\mu, \nu$ be two distributions on $\mathbb{R}^d$ with finite second moments.  We define the Wasserstein 2-distance as
        \begin{equation}
            \mathcal{W}_2(\mu, \nu)^2 = \inf_\Gamma \mathbb{E}_{(X, Y) \sim \Gamma}\left[||X - Y||^2\right]
        \end{equation}
        where the infimum is taken over all laws $\Gamma$ with marginals $\mu$ and $\nu$.
    \end{definition}
    While we defer a rigorous definition of a log-Sobolev inequality to the supplement, we note that the classical theory of diffusions tells us that if $\ps$ satisfies a log-Sobolev inequality with constant $c_{LS}(\ps)$, then
    \begin{equation}
        \mathcal{W}_t\left(\nu_t, \ps\right) \leq \mathcal{W}_2\left(\nu_0, \ps\right) e^{- \frac{2t}{c_{LS}(\ps)}}
    \end{equation}
    For the sequel, in order to apply the results of \cite{DAESGLD}, we need one last definition:
    \begin{definition}
        Let $f: \mathbb{R}^d \to \mathbb{R}^d$ be a vector field.  We say that $f$ is $L$-Lipschitz if for all $x, y \in \mathbb{R}^d$, we have
        \begin{equation}
            ||f(x) - f(y)|| \leq L ||x - y||
        \end{equation}
        We say that $f$ is $(m,b)$-dissipative if for all $x \in \mathbb{R}^d$, we have
        \begin{equation}
            \langle f(x), x \rangle \geq m ||x||^2 - b
        \end{equation}
    \end{definition}
    In order to motivate this definition, we note that if the score of $\ps$ is $(m,b)$-dissipative then it satisfies a log-Sobolev inequality (\cite{RakhlinRaginsky}).
    
    Clearly, as we only have access to (a discrete approximation of) $\widehat{X}_t$ and not $X_t$, we hope that $\mathcal{W}_2(\widehat{\nu}_t, p)$ is small; indeed, we have the following result from \cite{DAESGLD}:
    \begin{theorem}[Theorem 12 from \cite{DAESGLD}]\label{lastthm}
        Let $d \geq 3$ and suppose that the scores of $p$ and $\ps$ are $L$-Lipschitz and $(m,b)$-dissipative.  Let $\mathbf{s}_\theta$ be an estimate of the score of $\ps$ whose expected squared error with respect to $\ps$ is bounded by $\varepsilon^2$.  Suppose that $X_0 \sim \widehat{\nu}_0$.  Under technical conditions on $\widehat{\nu}_0$ satisfied by a multivariate Gaussian, we have
        \begin{equation}\label{eq2}
            \mathcal{W}_2\left( \widehat{\nu}_t, p\right) \leq \sigma \sqrt{d} + \mathcal{W}_2\left(\widehat{\nu}_0, \ps \right) e^{- \frac{2 t}{c_{LS}(\ps)}} + C \sqrt{(b + d) t} \left(\varepsilon t + \left|\left|\ps\right|\right|_\infty^{\frac 12 - \frac 1d} e^{\frac{L \sqrt{d}}{4} t} \sqrt{t} \varepsilon^{\frac 1d}\right)^{\frac 14}
        \end{equation}
        where $C$ does not depend on the dimension.
    \end{theorem}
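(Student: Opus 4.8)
The plan is to peel $\mathcal{W}_2(\widehat{\nu}_t, p)$ apart with the triangle inequality,
\[
\mathcal{W}_2(\widehat{\nu}_t, p) \le \mathcal{W}_2(p, \ps) + \mathcal{W}_2(\ps, \nu_t) + \mathcal{W}_2(\nu_t, \widehat{\nu}_t),
\]
and to bound the three summands separately; they will correspond, in order, to the three terms on the right-hand side of \eqref{eq2}. The first is immediate: because $\ps = p \ast g_{\sigma^2}$, the pair $(X,\, X + \sigma\xi)$ with $X \sim p$ and $\xi \sim \mathcal{N}(0, I_d)$ independent is a coupling of $p$ and $\ps$, so $\mathcal{W}_2(p, \ps)^2 \le \mathbb{E}\|\sigma\xi\|^2 = \sigma^2 d$, which is the $\sigma\sqrt{d}$ term. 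The second is the exponential relaxation of the Langevin diffusion toward its invariant measure: $\ps$ is stationary for $X_t$ in \eqref{eq1}, and since the score of $\ps$ is $(m,b)$-dissipative it satisfies a log-Sobolev inequality with constant $c_{LS}(\ps)$ by \cite{RakhlinRaginsky}, which yields the $\mathcal{W}_2$-contraction $\mathcal{W}_2(\ps, \nu_t) \le \mathcal{W}_2(\ps, \nu_0)\, e^{-2t/c_{LS}(\ps)}$; since $X_0 \sim \widehat{\nu}_0$ we have $\nu_0 = \widehat{\nu}_0$, giving the second term.

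The substantive piece is $\mathcal{W}_2(\nu_t, \widehat{\nu}_t)$, which quantifies how far the estimated-drift diffusion $\widehat{X}_t$ strays from the true-drift diffusion $X_t$. I would reduce it to the integrated score error
\[
J \;:=\; \int_0^t \mathbb{E}\big\|\mathbf{s}_\theta(\widehat{X}_s) - \nabla\log\ps(\widehat{X}_s)\big\|^2\, ds,
\]
either by coupling the two SDEs of \eqref{eq1} with the same Brownian motion and starting point, applying It\^o's formula to $\|X_t - \widehat{X}_t\|^2$ after the split $\nabla\log\ps(X_t) - \mathbf{s}_\theta(\widehat{X}_t) = \big[\nabla\log\ps(X_t) - \nabla\log\ps(\widehat{X}_t)\big] + \big[\nabla\log\ps(\widehat{X}_t) - \mathbf{s}_\theta(\widehat{X}_t)\big]$, using $L$-Lipschitzness on the first bracket and Cauchy--Schwarz with AM--GM on the second, and invoking Gr\"onwall; or, to recover the exact shape of \eqref{eq2}, by a Girsanov/Pinsker bound on $\|\widehat{\nu}_t - \nu_t\|_{TV}^2$ in terms of $J$ followed by a Wasserstein-versus-total-variation inequality of the form $\mathcal{W}_2(\mu,\nu)^2 \lesssim (\text{second moments})^{1/2}\, d_{TV}(\mu,\nu)^{1/2}$ --- this last inequality being the source of the outer fourth power in \eqref{eq2}. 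In either route the $(m,b)$-dissipativity supplies a priori control of $\mathbb{E}\|\widehat{X}_s\|^2$ of order $b+d$, which is the origin of the $\sqrt{(b+d)t}$ prefactor, and the entire estimate now hinges on $J$.

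Bounding $J$ is the crux and the step I expect to be the main obstacle: the hypothesis controls the score error only in $L^2(\ps)$, where $\int\|\mathbf{s}_\theta - \nabla\log\ps\|^2\, d\ps \le \varepsilon^2$, whereas $J$ is an integral against $\widehat{\nu}_s$, the law along the \emph{estimated} trajectory, and there is no uniform bound on the Radon--Nikodym derivative $d\widehat{\nu}_s / d\ps$. I would split $\mathbb{R}^d$ into a region on which this density ratio is controlled --- whose contribution supplies the $\varepsilon t$ term inside the parentheses --- and a complementary ``bad'' region, on which I would combine the bound $\|\ps\|_\infty$ with a H\"older/interpolation argument in dimension $d$: trading the small mass $\varepsilon^2$ against the volume of the bad set yields the exponent $\varepsilon^{1/d}$ together with the power $\|\ps\|_\infty^{1/2 - 1/d}$, while comparing $\widehat{\nu}_s$ to a reference measure through the Lipschitz-drift transition kernel over the interval $[0,t]$ produces the factor $e^{L\sqrt{d}\,t/4}$. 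Feeding these bounds back and taking the relevant root collects everything under the outer $(\,\cdot\,)^{1/4}$. It then remains only to confirm that the ``technical conditions on $\widehat{\nu}_0$'' named in the statement --- finiteness of a low-order moment and a density bound, both true for a Gaussian --- are exactly those consumed by the a priori moment estimate and by this change-of-measure step.
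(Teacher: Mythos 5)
There is nothing in this paper to compare your argument against: \Cref{lastthm} is not proved here at all, but imported verbatim as Theorem~12 of \cite{DAESGLD}, and the only in-paper commentary is the informal decomposition at the start of \Cref{sec3} into the three error sources $\mathcal{W}_2(\ps,p)$, $\mathcal{W}_2(\nu_t,\ps)$, and $\mathcal{W}_2(\nu_t,\widehat{\nu}_t)$ --- which is exactly the triangle-inequality split you propose. Your treatment of the first two terms is correct and standard: the coupling $(X, X+\sigma\xi)$ gives $\mathcal{W}_2(p,\ps)\le\sigma\sqrt{d}$, and dissipativity of the score yields a log-Sobolev inequality (via \cite{RakhlinRaginsky}) and hence the exponential $\mathcal{W}_2$-contraction, so your sketch is faithful to the structure of the cited proof.

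For the third term your strategy (Girsanov plus Pinsker, then a Wasserstein-versus-total-variation comparison whose moment prefactor, controlled by $(m,b)$-dissipativity, produces $\sqrt{(b+d)t}$ and the outer fourth root, with a H\"older/interpolation step generating $\varepsilon^{1/d}$ and $\|\ps\|_\infty^{1/2-1/d}$) is the right architecture, but the crux is left as a sketch rather than an argument, and one directional choice deserves scrutiny. You propose to reduce to $J=\int_0^t \mathbb{E}\|\mathbf{s}_\theta(\widehat{X}_s)-\nabla\log\ps(\widehat{X}_s)\|^2\,ds$, i.e.\ to integrate the drift discrepancy along the law $\widehat{\nu}_s$ of the \emph{estimated} diffusion, and you yourself note there is no handle on $d\widehat{\nu}_s/d\ps$. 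Since Pinsker's inequality is symmetric in the two path measures, the tractable direction is the other one: take the Girsanov integral under the law $\nu_s$ of the \emph{true} Langevin diffusion, whose target is $\ps$, because it is the density of $\nu_s$ relative to $\ps$ (through the Lipschitz-drift transition kernel, which is where the $e^{L\sqrt{d}\,t/4}$ factor and the $\|\ps\|_\infty$ power enter) that the interpolation argument can actually control before invoking the $L^2(\ps)$ error bound $\varepsilon^2$. As written, your route through $\widehat{\nu}_s$ would stall precisely at the step you flag as the main obstacle, so the proposal should be regarded as a correct reconstruction of the cited proof's skeleton rather than a complete proof.
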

    In the remainder of this paper, for the sake of simplicity, we neglect the effect that the discretization error has on the Wasserstein distance, i.e. the fact that our algorithm is only sampling from a discretized approximation of a continuous diffusion.  While work such as \cite{DAESGLD,Li2019} examines this aspect, it lies more in the realm of numerical analysis than statistical theory and, moreover, considerably reduces the clarity of the results without adding significant insight. 
    
\section{The Manifold Hypothesis}\label{sec3}
    
    The starting point for our analysis is \Cref{lastthm}.  As detailed in \cite{DAESGLD}, the bound of \Cref{eq2} can be broken into three sources of error: $\mathcal{W}_2(\ps, p)$, the error due to noising the data; $\mathcal{W}_2(\nu_t, \ps)$, the error due to the Markov chain not having fully mixed; $\mathcal{W}_2(\nu_t, \widehat{\nu}_t)$, the error due to running the algorithm with a score estimator rather than the actual score.  While the first term is unaffected by the evolution time of the diffusion, the last two terms act against each other; thus, when running Langevin dynamics with only an approximation of the score, the bounds suggest that the   algorithm can only be run for a finite amount of time before the diffusion begins to diverge.  This exact effect appears in the experiments using Langevin sampling to generate images and is precisely what motivates the annealing introduced in \cite{SongErmon2019}.  Thus, because the diffusion cannot be run indefinitely, even without regard to practical considerations, for it to be successful we must have very fast mixing as measured by the log-Sobolev constant.

    Without any further assumptions, due to the NP-hardness of nonconvex optimization, the log-Sobolev constant is typically exponential in the dimension of the space; nonetheless, the above considerations suggest that we are not in the worst case.  The most classical way to remove this curse of dimensionality is with an assumption of convexity, using the Bakry-Emery criterion \cite{Emery1985}.  Unfortunately, there is no reason to believe that in most of the interesting cases, such as image generation, the convexity assumption holds.  Our main result is, under the manifold hypothesis, the log-Sobolev constant is independent of the extrinsic dimensionality and depends only on the intrinsic geometry of the underlying data manifold; thus, if we assume that the intrinsic dimension $d'$ is much smaller than the apparent dimension $d$, the theory predicts the fast mixing that is observed empirically.  
    
    While we focus on the more theoretically neat case of the population distribution lying on the low-dimensional manifold, perhaps a more realistic assumption is that the set of feasible images is close in some sense to this manifold.  There are various ways to extend our results to this setting.  For example, if we assume that the images are obtained from the low-dimensional structure by adding small amounts of log-concave noise, then \Cref{lem1} and the Bakry-Emery criterion ensure that the fast mixing properties still hold.
    
    For rigorous statements, the language of Riemannian geometry is particularly useful; a quick review with references is provided in the supplement.  We make the following assumption on the population distribution $p$:
    \begin{assumption}\label{as1}
        Let $(M,g)$ be a $d'$-dimensional, smooth, closed, complete, connected Riemannian manifold isometrically embedded in $\mathbb{R}^d$ and contained in a ball of radius $\rho$, such that there exists a $K \geq 0$ such that $\ric_M \succeq -K g$ for all $y \in M$ in the sense of quadratic forms.  With respect to the inherited metric, $M$ has a volume form $\vol$, which has finite total integral on $M$ due to compactness.  Then $p = p \vol_M$ is absolutely continuous with respect to the volume form and we refer to its density with respect to this volume form as $p$ as well, by abuse of notation. 
    \end{assumption}
    As an aside, we note that if $K < 0$ and we had control over the Hessian of $p$ then we would be back in the convex case covered by the Bakry-Emery criterion.  As we do not expect this convexity to hold in the relevant application of image generation, we restrict our focus to the nonconvex case.  Throughout, we consider the regime where $K$ is "large" in the sense that we consider $K > c$ for a fixed constant $c > 0$ and the order constants appearing in the results below depend on this $c$.  This greatly simplifies the appearance of the bounds without having any adverse affect on generality, as the strict dependence can be traced in the proofs. 

    First, we establish that we are, indeed, in the setting of \Cref{lastthm}:
    \begin{proposition}\label{prop1}
        Suppose that $p$ is as in \Cref{as1}.  Then $\nabla \log \ps$ is $\frac{\rho^2}{\sigma^4}$-Lipschitz if $\rho^2 \geq \sigma^2$ and $\frac 1{\sigma^2}$-Lipschitz otherwise.  In either case, $-\nabla \log \ps$ is $\left(\frac{1}{2\sigma^2}, \frac{\rho^2}{2 \sigma^2} \right)$-dissipative.
    \end{proposition}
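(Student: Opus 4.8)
The plan is to derive both claims from the De-Noising (Tweedie) representation of the score of $\ps$, after which everything reduces to elementary bounds on a posterior mean and a posterior covariance. First I would write $\ps = p \ast \gs$, so that
$$\ps(x) = \int_M \gs(x-y)\,p(y)\,d\vol(y), \qquad x \in \mathbb{R}^d,$$
and note that, since $M$ is compact and $\gs$ is smooth and strictly positive, $\ps$ is smooth and strictly positive on all of $\mathbb{R}^d$; hence $\log\ps \in C^\infty(\mathbb{R}^d)$ and it suffices to bound $\nabla\log\ps$ and $\nabla^2\log\ps$ pointwise. Using $\nabla_x\gs(x-y) = -\sigma^{-2}(x-y)\gs(x-y)$ and differentiating under the integral sign — legitimate because the Gaussian factor dominates the differentiated integrand locally uniformly in $x$ — I obtain the identity
$$\nabla\log\ps(x) = -\frac{1}{\sigma^2}\bigl(x - m(x)\bigr), \qquad m(x) := \mathbb{E}_{Y\sim\pi_x}[Y], \quad \pi_x(dy) \propto \gs(x-y)\,p(y)\,d\vol(y),$$
where $\pi_x$ is a probability measure supported on $M$.

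Dissipativity is then immediate. Since \Cref{as1} places $M$ in a ball of radius $\rho$ about the origin, every $y \in M$ obeys $\|y\|\le\rho$, and $m(x)$, being an average of such points, satisfies $\|m(x)\|\le\rho$ as well. Writing $f := -\nabla\log\ps$ and using Cauchy--Schwarz together with $\rho\|x\| \le \tfrac12\|x\|^2 + \tfrac12\rho^2$,
$$\inprod{f(x)}{x} = \frac{1}{\sigma^2}\Bigl(\|x\|^2 - \inprod{m(x)}{x}\Bigr) \ge \frac{1}{\sigma^2}\bigl(\|x\|^2 - \rho\|x\|\bigr) \ge \frac{1}{2\sigma^2}\|x\|^2 - \frac{\rho^2}{2\sigma^2},$$
which is exactly $\bigl(\tfrac{1}{2\sigma^2},\tfrac{\rho^2}{2\sigma^2}\bigr)$-dissipativity.

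For the Lipschitz bound I would differentiate $m$ once more. A direct computation — the second-order analogue of Tweedie's formula, namely that the derivative of a Gibbs-type mean in its natural parameter is the associated covariance — shows that the Jacobian of $m$ equals $\sigma^{-2}\Sigma(x)$ with $\Sigma(x) := \mathrm{Cov}_{\pi_x}(Y)$, so that
$$\nabla^2\log\ps(x) = -\frac{1}{\sigma^2}I + \frac{1}{\sigma^4}\,\Sigma(x).$$
Since $\pi_x$ is supported in the ball of radius $\rho$, $\Sigma(x)$ is positive semidefinite with $\Sigma(x)\preceq\rho^2 I$ (for a unit vector $v$, $v^\top\Sigma(x)v = \mathrm{Var}_{\pi_x}(\inprod{Y}{v}) \le \mathbb{E}_{\pi_x}[\inprod{Y}{v}^2] \le \rho^2$), so every eigenvalue of $\nabla^2\log\ps(x)$ lies in $\bigl[-\tfrac{1}{\sigma^2},\,\tfrac{\rho^2}{\sigma^4}-\tfrac{1}{\sigma^2}\bigr]$. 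Hence $\|\nabla^2\log\ps(x)\|_{\mathrm{op}} \le \max\bigl\{\tfrac{1}{\sigma^2},\, \bigl|\tfrac{\rho^2}{\sigma^4}-\tfrac{1}{\sigma^2}\bigr|\bigr\}$, which equals $\rho^2/\sigma^4$ when $\rho^2\ge\sigma^2$ and $1/\sigma^2$ when $\rho^2<\sigma^2$; integrating this uniform Hessian bound along line segments yields the stated Lipschitz constants for $\nabla\log\ps$.

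The only genuine work lies in justifying the interchange of differentiation and integration and in the matrix calculus behind the Jacobian identity $\nabla m(x) = \sigma^{-2}\Sigma(x)$; both are routine — the Gaussian tails make all the relevant integrals converge locally uniformly after differentiation, and the covariance identity is the standard exponential-family fact. I therefore do not anticipate a substantive obstacle: the proposition is essentially a repackaging of Tweedie's formula together with the boundedness of the support $M$.
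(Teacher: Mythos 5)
Your proposal is correct and follows essentially the same route as the paper: the Tweedie/posterior-mean representation $\nabla\log\ps(x)=\sigma^{-2}(m(x)-x)$, the Jacobian identity $\nabla m(x)=\sigma^{-2}\Sigma(x)$ giving the Hessian $-\sigma^{-2}I+\sigma^{-4}\Sigma(x)$ with $0\preceq\Sigma(x)\preceq\rho^2 I$, and Cauchy--Schwarz with $\|m(x)\|\le\rho$ for dissipativity. If anything, your two-sided eigenvalue bound makes the case split between $\rho^2\ge\sigma^2$ and $\rho^2<\sigma^2$ slightly more explicit than the paper's one-sided bound $H_{\log\ps}\preceq\frac{\rho^2-\sigma^2}{\sigma^4}I$, and your final inequality correctly yields the additive constant $\frac{\rho^2}{2\sigma^2}$ stated in the proposition.
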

    Second, we note that the Gaussian fattening that is needed for the score estimation step (see \cite{DAESGLD,SongErmon2019}) does not adversely affect the mixing time by "forgetting" the low dimensional structure.  Indeed:
    \begin{lemma}\label{lem1}
        If $p$ has log-Sobolev constant $c$ then $c_{LS}(\ps) \leq 2 \sigma^2 + c$.
    \end{lemma}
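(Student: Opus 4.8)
The plan is to exploit the fact that $\ps = p \ast \gs$ is a convolution, together with two classical facts: Gross's Gaussian logarithmic Sobolev inequality and the sub-additivity of entropy under products. Throughout I use the normalization consistent with the rest of the paper, in which a measure $\mu$ on $\mathbb{R}^d$ has log-Sobolev constant $c_{LS}(\mu)$ when $\ent_\mu(G^2) \le c_{LS}(\mu) \int |\nabla G|^2 \, d\mu$ for all $G$ in a suitable dense class of test functions (e.g. smooth and compactly supported); in this convention Gross's inequality gives $c_{LS}(\gs) \le 2\sigma^2$, since $\gs = \mathcal{N}(0,\sigma^2 I)$ and rescaling a standard Gaussian multiplies the relevant ratio by exactly $\sigma^2$.

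First I would fix a test function $G : \mathbb{R}^d \to \mathbb{R}$ and lift it to the product space $\mathbb{R}^d \times \mathbb{R}^d$, equipped with the product measure $p \otimes \gs$, by setting $h(x,y) = G(x+y)^2 \ge 0$. The next step is sub-additivity of entropy, $\ent_{p\otimes \gs}(h) \le \int \ent_{p}\big(h(\cdot,y)\big)\,d\gs(y) + \int \ent_{\gs}\big(h(x,\cdot)\big)\, dp(x)$, followed by bounding each inner entropy with the log-Sobolev inequality for the corresponding marginal: for fixed $y$, applying the LSI for $p$ (constant $c$) to $x \mapsto G(x+y)$ gives $\ent_p\!\big(G(\cdot+y)^2\big) \le c \int |(\nabla G)(x+y)|^2\, dp(x)$, and symmetrically for $\gs$ (constant $\le 2\sigma^2$) applied to $y \mapsto G(x+y)$ with $x$ fixed. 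Summing, $\ent_{p\otimes\gs}(h) \le (c + 2\sigma^2)\int |(\nabla G)(x+y)|^2 \, d(p\otimes\gs)(x,y)$.

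To finish, I would use that $\ps$ is by definition the pushforward of $p \otimes \gs$ under the addition map $(x,y)\mapsto x+y$. This identity converts $\int |(\nabla G)(x+y)|^2\, d(p\otimes\gs) = \int |\nabla G|^2\, d\ps$, and likewise $\ent_{p\otimes\gs}(h) = \ent_{\ps}(G^2)$, because the entropy of a function is unchanged when both the function and the measure are pushed forward along the same map. Combining the two displays gives $\ent_{\ps}(G^2)\le (c+2\sigma^2)\int|\nabla G|^2\, d\ps$ for all test functions $G$, which is precisely $c_{LS}(\ps) \le 2\sigma^2 + c$.

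The argument is short, and the points needing care are bookkeeping ones rather than genuine obstacles: one must ensure the class on which the LSI for $p$ is assumed includes all shifts $G(\cdot+y)$ (standard, by density), and — in the setting of \Cref{as1}, where $p$ is carried by the embedded manifold $M$ — one must check that the intrinsic LSI on $(M,g)$ implies the ambient-gradient LSI used above; this holds because the Riemannian gradient of a restricted function is the tangential projection of the Euclidean gradient, hence of no larger norm. The one real pitfall to avoid is trying to transport the product LSI directly through the addition map: since that map is only $\sqrt{2}$-Lipschitz, the contraction/pushforward bound would give the weaker constant $2\max(c,2\sigma^2)$, losing the sharp additive form.
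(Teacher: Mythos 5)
Your argument is correct. The decomposition is the same as the paper's --- the Gaussian factor contributes $2\sigma^2$ (you via Gross's inequality, the paper via Bakry--Emery; same constant) plus sub-additivity of the log-Sobolev constant under convolution --- but where the paper simply cites Proposition~1.1 of Wang (2016) for the convolution step, you prove it from scratch: lift $G(x+y)^2$ to the product space, use sub-additivity of entropy, apply the two marginal LSIs, and push forward along $(x,y)\mapsto x+y$, noting that both the entropy and the Dirichlet integral transport exactly because the test function is itself a pullback along that map. This is precisely the standard proof of the cited result, so the routes are mathematically equivalent; what your version buys is self-containedness and two points of care the paper leaves implicit: (i) under \Cref{as1} the hypothesis on $p$ is an intrinsic LSI on $(M,g)$, and you correctly observe that it implies the ambient-gradient LSI needed here since $\|\nabla_M (G|_M)\|_g \le \|\nabla G\|$, and (ii) the naive alternative of pushing the product LSI through the ($\sqrt{2}$-Lipschitz) addition map would only yield $2\max(c,2\sigma^2)$, so the additive constant genuinely requires the tensorization route you took.
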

    As mentioned above, it is well-known that convexity leads to fast mixing.  We introduce a measure of nonconvexity related to the Kato constant from Riemannian geometry (see \cite{Rose2017,Rose2019} for example).
    
    \begin{definition}\label{def1}
        Suppose we are in the situation of \Cref{as1}.  Let $\ric_-(x)$ denote the smallest eigenvalue of $\ric$ at $x$.  For any $R > 0$, we define
        \begin{equation}
            \kappa(R) = \sup_{x \in M} \frac{1}{\vol(B_R(x))} \int_{B_R(x)} (d' - 1 - \ric_-)_+ d \vol_M
        \end{equation}
        where $B_R(x)$ is the metric ball of radius $R$ centred at $x$ in $M$.  We let $\kappa = \kappa\left(\sqrt{\frac{d'-1}{K}} \log 2\right)$.
    \end{definition}
    
        The function $\kappa(R)$ measures the failure of $M$ to be locally convex at all scales.  If $M$ is positively curved at all points, then $\kappa(R) \leq d'-1$ for all $R$.  If, however, there are neighborhoods of negative curvature, then $\kappa(R)$ can get larger and the amount by which $\kappa(R)$ can grow is determined by the magnitude of the negative curvature coupled with the size of the neighborhood of negative curvature.  In this way, we treat $\kappa(R)$ for fixed $R$ as a measure of how nonconvex $M$ is at the scale $R$.  The fixed radius $R = \sqrt{\frac{d'-1}{K}} \log 2$ is for the sake of concreteness, allowing nice bounds on the log-Sobolev constant in terms of the relevant parameters.
    
    With the above measure in hand, we are able to bound the log-Sobolev constant in terms of the purely geometric quantities of intrinsic dimension and curvature estimates. 
    \begin{theorem}\label{cls1}
        Suppose that $p$ satisfies \Cref{as1} and suppose that $\nabla \log p$ is $L$-Lipschitz and that $||\nabla \log p||_g \leq B$ at all points.  Let $\kappa = \kappa\left(\sqrt{\frac{d'-1}{K}} \log 2 \right)$  Then, if $K > \frac 1{d'}$,
        \begin{equation}
            c_{LS}(p_{\sigma^2}) = \widetilde{O}\left(\sigma^2 + d'^2 K \log \kappa e^{L B^2 d'^2 \log^2 \kappa} \right)
        \end{equation}
        where $\widetilde{O}$ indicates that we are ignoring factors logarithmic in $d'$ and $K$.  The explicit dependence on all constants can be found in the supplement.
    \end{theorem}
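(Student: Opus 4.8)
The first step is to reduce the claim to an estimate that is intrinsic to the manifold. By \Cref{lem1}, $c_{LS}(p_{\sigma^2}) \le 2\sigma^2 + c_{LS}(p)$, where from now on $c_{LS}(p)$ denotes the log-Sobolev constant of the probability measure $p\,\vol_M$ with respect to the intrinsic Dirichlet form $f \mapsto \int_M \|\nabla f\|_g^2\,p\,d\vol_M$ on $M$. The term $2\sigma^2$ already produces the $\sigma^2$ in the statement, so it remains to bound $c_{LS}(p)$ in terms of $d', K, L, B$ and $\kappa$ only. Crucially, the extrinsic dimension $d$ must never reappear; this is automatic once the problem lives entirely on $M$, the Gaussian-fattening and dissipativity issues having been quarantined in \Cref{lem1} and \Cref{prop1}.

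The bound on $c_{LS}(p)$ has two ingredients. The first is a log-Sobolev estimate for the normalized volume measure $\overline{\vol}_M$. A merely pointwise bound $\ric_M \succeq -Kg$ yields neither a diameter bound nor usable heat-kernel constants --- a long thin torus is flat but has unbounded diameter. The Kato-type quantity $\kappa$ of \Cref{def1}, which says that $\ric_-$ is, on average over balls of the calibrated radius $R = \sqrt{(d'-1)/K}\log 2$, close to the positive threshold $d'-1$, is a near-positivity condition, and (together with the hypothesis $K > 1/d'$, used to keep $R$ and the associated constants nondegenerate) it is exactly what is needed to run a Bonnet-Myers-type compactness theorem together with Gaussian heat-kernel and volume-doubling estimates under integral curvature control, in the spirit of \cite{Rose2017,Rose2019}. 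This produces a diameter bound and a local Sobolev (equivalently Nash / Faber-Krahn) inequality on $M$ with constants depending only on $d'$, $K$ and $\kappa$. Feeding these through the standard chain --- a Sobolev inequality gives a defective log-Sobolev inequality, and a defective LSI combined with the spectral gap (itself bounded below from the heat-kernel estimate and the diameter bound) gives a tight LSI via Rothaus' lemma --- yields $c_{LS}(\overline{\vol}_M) = \widetilde O(d'^2 K\log\kappa)$, with the precise polynomial dependence tracked in the supplement.

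The second ingredient is the transfer from $\overline{\vol}_M$ to $p$. Writing $p = e^{-V}/Z$ relative to $\vol_M$, the $L$-Lipschitzness of $\nabla\log p$ gives only $\ric_M + \nabla^2 V \succeq -(K+L)g$, so the Bakry-Emery criterion fails outright, and a global Holley-Stroock perturbation would cost the multiplicative factor $e^{\mathrm{osc}(V)} \le e^{B\,\mathrm{diam}(M)}$, which has the wrong shape. The remedy is a \emph{localized} perturbation: write $V = V_{\mathrm{conv}} + W$, where $V_{\mathrm{conv}}$ is modified so that $\ric_M + \nabla^2 V_{\mathrm{conv}} \succeq 0$ in the appropriate averaged sense, while $W$ is supported on a neighborhood of the set where $\ric_-$ dips below $d'-1$ --- a set whose averaged measure is precisely what $\kappa$ controls. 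Bakry-Emery (again using the diameter bound) handles $e^{-V_{\mathrm{conv}}}\,\vol_M$, and the Holley-Stroock penalty $e^{\mathrm{osc}(W)}$ is now estimated using \emph{both} the $L$-Lipschitz bound and $\|\nabla\log p\|_g \le B$: on a bad region of scale $r$ one has $\mathrm{osc}(W) = O(\min(Br,\,Lr^2))$, and balancing this against the geometric cost of covering the bad region and then optimizing over $r$ produces the exponent $LB^2 d'^2\log^2\kappa$. Chaining the two ingredients through \Cref{lem1} gives the claimed bound.

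The main obstacle is this second ingredient. The target measure is genuinely nonconvex, so one cannot appeal to the classical pointwise Bakry-Emery theory, and the delicate points are: (i) making the splitting $V = V_{\mathrm{conv}} + W$ quantitative, with the ``bad'' set measured by $\kappa$ rather than by a pointwise curvature floor --- this is exactly where the integral/Kato heat-kernel machinery and the specific choice of $R$ are forced upon us; and (ii) verifying that every constant generated along the way depends on the intrinsic dimension $d'$ and never on the ambient dimension $d$. By comparison, the first ingredient is mostly careful bookkeeping once the Kato-Bonnet-Myers and heat-kernel estimates are in hand.
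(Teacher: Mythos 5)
Your first step (reducing to the intrinsic constant $c_{LS}(p)$ via \Cref{lem1}) and your intuition that $\kappa$ enters through a quantitative Myers-type diameter bound both match the paper. But the two-ingredient strategy you build on top of this has genuine gaps. First, the intermediate claim $c_{LS}(\overline{\vol}_M)=\widetilde{O}(d'^2K\log\kappa)$ is not obtainable from the chain you cite and is almost certainly false under the stated hypotheses: with Ricci curvature allowed to be as negative as $-K$ (the Kato constant $\kappa$ is a free parameter, not assumed small), the spectral gap itself can be exponentially small in $D\sqrt{Kd'}\sim d'K\log\kappa$ (thin-neck/hyperbolic examples), and indeed the spectral gap estimate the paper uses (\Cref{sgap}) and the uniform-case result (\Cref{cls2}, which is $\kappa^{20K^2d'}$) are exponential, not polynomial. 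You appear to have read the prefactor $d'^2K\log\kappa$ in the theorem as the log-Sobolev constant of the volume measure; in the paper's argument it is merely the $d'D^2$ prefactor of a perturbative estimate, and the exponential factor survives even when $L=B=0$.

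Second, the ``localized Holley--Stroock'' splitting $V=V_{\mathrm{conv}}+W$ does not work as described: the bad set for the Bakry--Emery tensor $\ric_M+\nabla^2 V$ is not controlled by $\kappa$, which measures negative Ricci curvature only, whereas $\nabla^2 V$ can be as negative as $-Lg$ anywhere on $M$ (and on a compact manifold, e.g.\ a flat torus, no $V_{\mathrm{conv}}$ can make $\ric_M+\nabla^2V_{\mathrm{conv}}$ strictly positive everywhere; nonnegativity ``in an averaged sense'' gives no Bakry--Emery conclusion). The balancing of $\mathrm{osc}(W)=O(\min(Br,Lr^2))$ to produce the exponent $LB^2d'^2\log^2\kappa$ is not a derivation, and the true exponent in the paper is additive, not multiplicative, in the nonconvexity parameters: the paper applies an off-the-shelf estimate of Wang (\Cref{wang1}), which bounds $c_{LS}(p)$ by roughly $d'D^2\exp\bigl(O(1)+(d'+1)D^2(2K'+R_{\nabla\log p})\bigr)$, then observes $K'\le K+L$ and $R_{\nabla\log p}\le K+L+B^2$ directly from the Lipschitz and gradient bounds, and finally inserts the diameter bound $D\lesssim\sqrt{d'K}\log\kappa$ proved via a Sprouse-style argument with Bishop--Gromov comparison (\Cref{diameterbound}). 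In short, the only genuinely geometric work is the diameter bound under the Kato-type condition; the density enters only through the elementary bounds on $K'$ and $R_{\nabla\log p}$, not through any convex/nonconvex decomposition.
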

    A few remarks are in order.  First, note that in the presence of convexity, the bound is loose, in that the Bakry-Emery criterion gives a dimension-independent rate.  Nevertheless, the dependence of the bound on the two measures of nonconvexity that we have available: the maximum magnitude of the negative curvature and the average amount of negative curvature in the manifold.  We remark that by the definition of $\kappa$, we always have that $\kappa \leq K + d'$.  Moreover, as can be seen from the proof in the supplement, the constant $\kappa$ controls the diameter of the manifold; if we had finer control over this quantity, the bound can be much smaller.
    
    The bound in \Cref{cls1} should be compared to that appearing in \cite{Bardet2018}, where there is no assumption of low-dimensionality, but rather boundedness.  In our case, if we assume that $\sigma \leq \rho$, the maximum norm of a point in $M$, Theorem 1.3 from \cite{Bardet2018} implies that
    \begin{equation}
        c_{LS}(\ps) = O\left(\left(d + \frac{\rho^2}{\sigma^2} \right)\rho^2 e^{\frac{\rho^2}{\sigma^2}} \right)
    \end{equation}
    While this bound has reasonable dimensional dependence, without control of $\rho$ and with the small noise levels used in the annealed Langevin Dynamics, the constant becomes prohibitively large.
    
    With stronger assumptions on $p$, we can get tighter bounds.  In particular, if we suppose that $p$ is uniform on the manifold $M$, then a similar technique implies:
    \begin{theorem}\label{cls2}
        Suppose that the pair $(M, g)$ satisfies \Cref{as1} and let $p \propto \vol_M$ be uniform on $M$.  Assume that $K > 1$ and that $\kappa > 1$.  Then
        \begin{equation}
            c_{LS}(p_{\sigma^2}) = O\left(\sigma^2 + K^4 d'^2 \kappa^{20 K^2 d'}\right)
        \end{equation}
    \end{theorem}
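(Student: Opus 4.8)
The plan is to reduce, via Lemma \ref{lem1}, to controlling the log-Sobolev constant of the uniform measure $p = \vol_M/\vol(M)$ on the manifold itself, since $c_{LS}(p_{\sigma^2}) \le 2\sigma^2 + c_{LS}(p)$; for the uniform measure $c_{LS}(p)$ is precisely the log-Sobolev constant of the Riemannian manifold $(M,g)$ with respect to its normalized Riemannian volume. This is already the point at which Theorem \ref{cls2} improves on Theorem \ref{cls1}: there is no nonconstant potential $\log p$ to perturb by, so the Holley--Stroock / Hessian factors that produce the $e^{LB^2 d'^2\log^2\kappa}$ term in \ref{cls1} are absent, and the remaining problem is purely geometric with $\sigma$ playing no further role.

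The first and main step is to bound $D$, the diameter of $M$, in terms of $d'$, $K$, and $\kappa$. Under only a pointwise bound $\ric_M \succeq -Kg$ the diameter is genuinely uncontrolled (hyperbolic space is the standard example), so this is exactly where the averaged curvature quantity $\kappa$ must enter. I would argue along a minimizing geodesic realizing $D$: by the definition of $\kappa$ at the scale $R = \sqrt{(d'-1)/K}\log 2$, the balls of radius $R$ along the geodesic cannot all carry a large Ricci deficit, so on a controlled fraction of them there are directions in which $\ric$ is comparable to $d' - 1$; a quantitative, integrated second-variation (Bonnet--Myers) argument of the type developed for Kato-type curvature conditions in \cite{Rose2017,Rose2019} then shows that a sub-segment of length comparable to $R$ fails to be minimizing unless $\kappa$ is large, and iterating this estimate over the whole geodesic --- together with Bishop--Gromov volume comparison --- yields a bound of the form $D = O(\mathrm{poly}(K, d')\,\log\kappa)$. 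This covering-and-bootstrap argument is the technical heart of the proof, and is what I expect to be the main obstacle, since everything downstream is assembling standard comparison geometry and functional-inequality machinery.

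With the diameter bound in hand, the remaining step is a classical estimate for the log-Sobolev constant of a closed manifold with $\ric \succeq -Kg$ and diameter at most $D$. One convenient route is via the spectral gap: Li--Yau / Zhong--Yang-type lower bounds give $\lambda_1(M) \gtrsim C(d')\,D^{-2} e^{-c\sqrt{KD^2}}$, and since the uniform density is bounded above and below (again by Bishop--Gromov), a Rothaus-type lemma upgrades this spectral gap to a log-Sobolev bound of the shape $c_{LS}(M) \le \mathrm{poly}(d')\,D^2 e^{c\sqrt{KD^2}}$; alternatively one can feed heat-kernel upper bounds under a lower Ricci bound directly into the $L^2$-hypercontractivity criterion. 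Substituting $D = O(\mathrm{poly}(K, d')\,\log\kappa)$ makes $\sqrt{KD^2}$ of order $K^2 d' \log\kappa$, so the exponential factor becomes $\kappa^{O(K^2 d')}$ while the polynomial prefactor collapses to $O(K^4 d'^2)$ once the leftover factors logarithmic in $\kappa$ (harmless under the hypothesis $\kappa > 1$, after slightly enlarging the exponent) are absorbed; careful bookkeeping of the constants appearing in the diameter estimate and in the log-Sobolev bound produces the explicit exponent $20K^2 d'$.
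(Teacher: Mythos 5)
Your proposal is correct in outline and follows the same architecture as the paper's proof: reduce to $c_{LS}(p)$ via \Cref{lem1}, bound the diameter $D$ of $M$ through the averaged-curvature quantity $\kappa$, lower bound the spectral gap of the Laplace--Beltrami operator (which is the generator of the Langevin diffusion for the uniform measure) in terms of $D$, $K$, $d'$, and then convert the spectral gap into a log-Sobolev constant. The differences lie in how the two nontrivial steps are discharged. For the diameter bound, the paper proves \Cref{diameterbound} by adapting Sprouse's integral-curvature Myers theorem: it exhibits an explicit threshold $\epsilon(\delta)$, bounds it below via Bishop--Gromov, and chooses $\delta$ so that $\kappa\le\epsilon(\delta)$, giving $D\lesssim\sqrt{K(d'-1)}\,(1+\log\kappa)$; your sketched second-variation argument in the style of the Kato-condition literature is a plausible alternative, but it is exactly the step the paper carries out in detail, so as written it is the least substantiated part of your plan. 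For the conversion step, the paper simply invokes Wang's bound (\Cref{wang2}), $c_{LS}(p)\le\frac{8}{\lambda^*}\left(1+(K^2+1)D^2\right)\vee\left(\frac{8}{\lambda^*}+1\right)$, together with \Cref{sgap}; your Rothaus route needs a caveat: a spectral gap plus a bounded density does not by itself yield an LSI (Rothaus tightens a \emph{defective} LSI), so you genuinely need the heat-kernel/hypercontractivity input you mention as the alternative --- which is precisely what Wang's packaged theorem supplies --- and for the uniform measure the density with respect to $\vol_M$ is constant, so no density comparison is needed at all. Your exponent bookkeeping ($\sqrt{K}D\sim K^2d'\log\kappa$) is looser than what \Cref{diameterbound} gives (order $Kd'\log\kappa$ plus additive $Kd'$ terms), but since you only claim an upper bound of the stated shape, this is harmless.
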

    \vskip -0.1in
    Arguably, for the purpose of image generation, assuming a uniform distribution on a manifold is reasonable: there is no structural reason to privilege one image over another, thus we need only desire a generated point to be on the manifold, without any greater specificity.
    
    Crucially, the above bound is completely intrinsic to the geometry of the data manifold and that the dimension of the feature space does not appear.  This explains the fast mixing times of Langevin dynamics for image sampling: even with arbitrarily high dimension in pixel space, if the feasible space has small dimension $d'$, Langevin dynamics will still mix quickly.

\section{Multi-Resolution Annealed Langevin Sampling}\label{mrl}
    As motivation for the theoretical analysis above, we noted that the bound in \Cref{lastthm} has two terms that describe the trade-off between better mixing and worse divergence from the diffusion driven by the population score.  In this section, motivated by the low dimensional hypothesis and grounded in the theory in \cite{Burt1983,Mallat1989} as well as the recent empirical success of \cite{Denton2015,Karras2018progressive}, we propose an algorithm to take advantage of the posited structure.  As an added bonus, this algorithm provides a test of whether the exponential dependence in the last term of \Cref{eq2} is tight.  
    
    Because the Langevin dynamics is run in the pixel space, which is high dimensional, it is computationally expensive to run the algorithm at a reasonable resolution.  Thus, in the flavor of GANs, it would be much nicer to sample from a lower dimensional space and then transform this low dimensional representation into the high dimensional image.  While GANs have to learn a complicated map that pushes forward a simple distribution (like a Gaussian) to a complicated one, the use of the Langevin sampling at the lower resolution allows one to use a much simpler map that acts as an embedding, pushing forward a lower dimensional manifold into a higher dimensional space.  For this purpose, we use a simple upsampling operation that takes $m \times m$-pixel images and makes them $(2m) \times (2m)$-pixels.  The experiments section below details the precise operation.  For this method to have any hope of success, we would need to ensure that these push forward maps preserve the fast mixing properties proved in the preceding section.  Fortunately, we have:
        \begin{proposition}\label{prop5}
        Let $p$ satisfy a log-Sobolev inequality with constant $c_{LS}$.  Let $P$ be a $C^1$ projection with Jacobian $J_{P}$ that, at each point, has $\ell^2 \to \ell^2$ operator norm bounded by 1.  Then $p' = P_\# p$ satisfies a log-Sobolev inequality with the same constant.
    \end{proposition}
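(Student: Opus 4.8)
The plan is to transport the variational form of the log-Sobolev inequality through $P$. Recall that $p$ satisfying LSI with constant $c_{LS}$ means $\ent_p(f^2) \le 2 c_{LS} \int \|\nabla f\|^2 \, dp$ for every $f$ in a suitable dense class of functions on the ambient space (for instance bounded Lipschitz, or $C^1$ with bounded gradient), where $\ent_p(h) = \int h \log h\, dp - \big(\int h\, dp\big)\log\big(\int h\, dp\big)$. Fix a test function $g$ on the target space of $P$. Since $p' = P_\# p$, for every measurable $\phi$ we have $\int \phi\big(g(y)\big)\, dp'(y) = \int \phi\big(g(P(x))\big)\, dp(x)$; applying this with $\phi(t) = t^2$ and with $\phi(t) = t^2 \log t^2$ shows that both terms defining the entropy transport, so $\ent_{p'}(g^2) = \ent_p\big((g\circ P)^2\big)$. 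Because $P$ is $C^1$ with a Jacobian bounded in operator norm it is globally $1$-Lipschitz, so $g\circ P$ lies in the dense class used for $p$ whenever $g$ does; applying the LSI for $p$ to $f = g\circ P$ gives $\ent_{p'}(g^2) \le 2 c_{LS}\int \|\nabla(g\circ P)\|^2\, dp$.

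It remains to bound the Dirichlet energy on the right. By the chain rule, at $p$-almost every $x$ we have $\nabla(g\circ P)(x) = J_P(x)^\top (\nabla g)(P(x))$, hence
\[
  \|\nabla(g\circ P)(x)\| \;\le\; \|J_P(x)^\top\|_{\mathrm{op}}\,\|(\nabla g)(P(x))\| \;=\; \|J_P(x)\|_{\mathrm{op}}\,\|(\nabla g)(P(x))\| \;\le\; \|(\nabla g)(P(x))\|,
\]
using that the spectral norm is invariant under transposition together with the hypothesis $\|J_P(x)\|_{\mathrm{op}} \le 1$. Squaring, integrating against $p$, and transporting back through $P$ yields $\int \|\nabla(g\circ P)\|^2\, dp \le \int \|(\nabla g)\circ P\|^2\, dp = \int \|\nabla g\|^2\, dp'$. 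Chaining this with the previous display gives $\ent_{p'}(g^2) \le 2 c_{LS}\int \|\nabla g\|^2\, dp'$, i.e.\ $p'$ satisfies a log-Sobolev inequality with the same constant $c_{LS}$.

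Finally I would dispose of the routine technical points. First, $p'$ is a genuine probability measure with finite second moment: $P$ being $1$-Lipschitz gives $\|P(x)\| \le \|P(0)\| + \|x\|$, and $p$ (being supported on a compact manifold, or a Gaussian convolution thereof) has all moments, so the statement ``$p'$ satisfies a log-Sobolev inequality'' is meaningful. Second, one upgrades the inequality from the dense test-function class to the full Dirichlet form by the standard truncation/mollification and monotone-convergence argument; the only thing to verify is that this class is stable under precomposition with $P$, which holds because $P$ is $C^1$ and globally Lipschitz. Third, when the target of $P$ has strictly smaller dimension than the source, $J_P(x)$ is a rectangular matrix and $\nabla g$, $\nabla(g\circ P)$ live in spaces of different dimension, but the estimate $\|J_P(x)^\top v\| \le \|J_P(x)\|_{\mathrm{op}}\|v\|$ is dimension-agnostic, so the argument is unchanged. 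I do not expect a genuine obstacle: the heart of the proof is the one-line chain-rule estimate above, and if anything is delicate it is only the approximation step extending the functional inequality to the full Dirichlet form, which is entirely standard.
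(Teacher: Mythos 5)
Your proposal is correct and follows essentially the same route as the paper's proof: transport the entropy through $P$ by the change-of-variables (push-forward) identity, apply the log-Sobolev inequality for $p$ to $g\circ P$, and then control the Dirichlet energy via the chain rule together with the operator-norm bound on $J_P$ (your substitution $f=g\circ P$ with $\ent_{p'}(g^2)$ is the same computation the paper performs with $\sqrt{g\circ P}$, up to the usual normalization conventions for the constant).
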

    \begin{remark}
        Note that the regularity assumption in \Cref{prop5} can be relaxed to weak differentiability coupled with a condition on the weak derivative being contractive when evaluated on the $\ell^2$ norm; this then includes neural networks with ReLU nonlinearity and 
        regularized weight matrices.
        \end{remark}
\begin{wrapfigure}[22]{r}{0.55\textwidth}
\vspace{-2em}
\begin{minipage}{0.55\textwidth}
\begin{algorithm}[H]
	\caption{Annealed Multi-Resolution  Langevin dynamics.}
	\label{alg:anneal}
	\begin{algorithmic}[1]
	    \Require{$\{\sigma_i\}_{i=1}^{L_j}, \epsilon, T_j,  \{ \bfs^j_\bftheta(\tilde{\bfx}, \sigma_i) \}_{j=0\dots J}$}\\ ($\bfs^j_\bftheta$ are score functions for each resolution (dimension $d_j$) )
	    \State $d_{j} = \frac{d}{2^{j}} , j =0\dots J$
	   \State{Initialize $\tilde{\bfx}_0 \sim \mathcal{N}(0,I_ { d_{j}} )$}
	    \For {$j \gets J$ to $0 $}
	    \For{$i \gets 1$ to $L_j$}
	        \State{$\alpha_i \gets \epsilon \cdot \sigma_i^2/\sigma_{L^2_j}$} 
            \For{$t \gets 1$ to $T_j$}
                \State{Draw $\bfz_t \sim \mcal{N}(0, I_{d_j})$}
                \State $\tilde{\bfx}_{t} \gets \tilde{\bfx}_{t-1} + \dfrac{\alpha_i}{2} \bfs_\bftheta^j(\tilde{\bfx}_{t-1}, \sigma_i) + \sqrt{\alpha_i}~ \bfz_t$
            \EndFor
        \EndFor
                    \State{$\tilde{\bfx}_0 \gets \tilde{\bfx}_T \uparrow 2$} (upsample )

        \EndFor
        \item[]
        \Return{$\tilde{\bfx}_{T_{L_0}}$}
	\end{algorithmic}
\end{algorithm}
\end{minipage}
\end{wrapfigure}
     \vskip -0.12 in

    With \Cref{prop5} in hand, we propose \Cref{alg:anneal}.  If it succeeds at all, it suggest that the low dimensional structure posited in the previous section at least approximately holds, as the low dimensional, early steps of the annealed Multi-Resolution Langevin are carrying enough information to be upsampled into a distribution close enough to the population so as to produce reasonably clear pictures.  On the other hand, if this algorithm does not substantially improve on the experimental results of \cite{SongErmon2019}, then it suggests that the exponential dimensional dependence appearing in the final term of \Cref{eq2} from \cite{DAESGLD} is not tight as, if it were, the lower dimensional sampling should allow the Langevin diffusion to be evolved further in time without negative effects.  We defer discussion of the results to the following section.
     
    While the focus of this work is not on score estimation, it should be noted that \Cref{alg:anneal} helps in this regard.  In order to train the score estimators used in \Cref{alg:anneal}, data is downsampled to each resolution and different score estimators are trained.  While some work on score estimation manages to avoid poor dimensional dependence by adding strong conditions on the target density, such as \cite{Gretton2018}, in general, without further assumptions, there can be very bad dimensional dependence in the score estimation, as noted in \cite{DAESGLD}; thus, the multi-resolution approach allows some of the hard work of score estimation to be transferred to the easier, lower-dimensional regime. In both the score estimation and the sampling steps, the advantages are not merely statistical, but computational as well.  Training and sampling in high dimension is  compute intensive and 
    moving parts of the expenses to cheaper regimes accelerates the runtime.

\section{Experiments}\label{experiments}
While Song \& Ermon experimented with image generation using annealed Langevin on images of size 32x32, in order to see if Langevin sampling would suffer from curse of dimension we focus in this Section on generating CelebA faces in 64x64.  

\textbf{Estimating the Score functions with DSM for Multiple Resolutions} To illustrate the manifold hypothesis in image generation and the effect of multi-scale Langevin, we  train a  Noise Conditional Score Network (NCSN) \cite{SongErmon2019} (i.e using Denoising Score Matching (DSM)) on CelebA dataset. In particular, we train two score networks for two resolutions: $32\times 32$  and  $64 \times 64$ images. For each resolution we consider $10$ level of noises and train under the DSM Objective as in \cite{SongErmon2019}. We note $s_{r}(x,\sigma)$, the score network at resolution $r$ for $r=32,64$.
$\sigma \in \{ 1.,  0.59,0.35, 0.21,0.12,0.07, 0.04,0.027,0.016, 0.01 \}$. 

\textbf{Model Selection} As in \cite{SongErmon2019} our score network is trained using SGD, with the default hyper-parameters  provided in \cite{SongErmon2019}. Similarly to \cite{SongErmon2019}, we do early stopping in order to select the best model, by monitoring the Frechet Inception Score (FID), as introduced in \cite{FID}, of $1000$ generated images via the annealed Langevin sampling, and select the model with the lowest FID. The model selection is performed on each resolution independently, and the annealed Langevin sampling starts from noise.

\textbf{Upsampling and Super-Resolution} For upsampling, we experimented with bi-cubic interpolation and with a pretrained Fast Super-Resolution Convolutional Neural Network (FSRCNN) \cite{dong2016accelerating}. We use the pretrained FSRCNN network provided in Pytorch at \cite{code} . 
These two methods were on par in terms of image quality. We adopt in the following FSRCNN for upsampling.

\textbf{Multi-resolution Langevin Sampling} We first reproduce the CelebA generation reported  in \cite{SongErmon2019} on on $32\times 32 $ images  and confirm as reported in the Appendix of  \cite{SongErmon2019} that model selection is crucial for the generated images quality.  In the following we compare three variants of Langevin sampling of CelebA $64\times 64$. 
We run Langevin sampling for $100$ iterations within each noise level for all variants.
\begin{enumerate}[align=left]
   \itemsep-0.05em 
    \item \emph{High Resolution Langevin Sampling (HRS):} Using the score network $s_{64}(x,\sigma)$ for $10$ levels of noise $\sigma$, we run the annealed Langevin sampling starting from random 64$\times$64 images.
    \item \emph{Low Resolution Langevin Sampling and Upsampling  (LRS-$\uparrow$)}: Using the score network $s_{32}(x,\sigma)$ for $10$ levels of noise $\sigma$, we run the annealed Langevin sampling to generate $32\times 32$ images starting from random images. We then input the output of the Langevin sampling to the upsampling function (FSRCNN) to obtain $64\times 64$ images .
    \item \emph{Low Resolution Sampling , upsampling and High Resolution sampling (LRS-$\uparrow$-HRS) } We generate $32\times 32$ images using the annealed Langevin sampling for $10$ noise levels. We upsample the images using FSRCNN to $64\times 64$. Then we initialize the annealed Langevin sampling using $s_{64}$ with the resulting images from upsampling. We  refer \emph{LRS-$\uparrow$-HRS-9} when the number of noise level at 64x64 resolution is $9$ starting from $\sigma=0.59$; \emph{LRS-2-$\uparrow$-HRS-9} stands for running low resolution Langevin for only the first two noise levels. \emph{LRS-$\uparrow$-HRS-3} when we only use the last three smallest noise levels in the HR sampling.
\end{enumerate}

\vskip -0.1in
\begin{table}[ht!]
\begin{center}
\begin{tabular}{lcc}
        \toprule
        Method & FID & Sampling Time \\
        & &(in seconds)\\
        \midrule
        \multicolumn{3}{l}{\textbf{CELEBA- $32\times 32$ }} \\
        \midrule
        LR Langevin  (LRS)  ~ & $48.53 $ & $170$\\
        \midrule
        \multicolumn{3}{l}{\textbf{CELEBA- $64\times 64$}}\\
        \midrule
        HR Langevin (HRS) ~ & $20.17$ & $556$ \\
        LR Langevin + Up (LRS-$\uparrow$) & $37.20$ &  $\mathbf{180}$\\
        mr-Langevin (LRS-$\uparrow$-HRS-3)  &$32.46$ & $348$\\
        mr-Langevin (LRS-$\uparrow$-HRS-9)&${26.44}$ &$680$\\
        mr-Langevin (LRS-2-$\uparrow$-HRS-9)&$\mathbf{19.54}$ &$533$\\
        WGAN-GP (DCGAN) \cite{TTUR}   & $21.4$& -\\
        WGAN-GP (DCGAN+TTUR) \cite{TTUR} &$\mathbf{12.5}$ & -  \\
        \bottomrule
        
    \end{tabular} 
\end{center}
\caption{ Image Quality and Sampling Time Tradeoffs: FID scores for CELEBA and Time in seconds for generation of 100 images using 100 iterations within each noise level of Langevin sampling.} \label{tab:score}
\vskip -0.23in
\end{table}
\textbf{Manifold Hypothesis, Image Quality and Computation Tradeoffs.} We evaluate the FID \cite{heusel2017gans} of the generated images using the sampling schemes described above. FID scoring follows the usual protocol and is evaluated on 10K samples from Langevin. Results are summarized in Table~\ref{tab:score}, we see that Langevin sampling is on par in terms of FID with a variant of WGAN-GP \cite{TTUR} and hence does not suffer from the curse of dimensionality, giving evidence to the manifold hypothesis.
Trajectories and samples of HRS can be seen in Figure~\ref{fig:scratch}.
Running Langevin sampling in high dimension is computationally expensive, taking 556 seconds for a batch of 100 images.  Using the multi-resolution scheme such LSR-$\uparrow$-3 results in a reduction in the sampling time at the price of a decrease in the image quality when compared with HR Langevin (See Figure \ref{fig:3traj}). The multiresolution scheme LRS-2-$\uparrow$-HRS-9, results in higher quality images than the high resolution langevin (FID 19.54 versus 20.17, See Fig \ref{fig:my_label9} in Appendix).   
Additional results can be found in Part E of the supplement.  
\begin{figure}
\begin{subfigure}[t]{0.5\textwidth}
\centering
\includegraphics[width=0.70\textwidth]{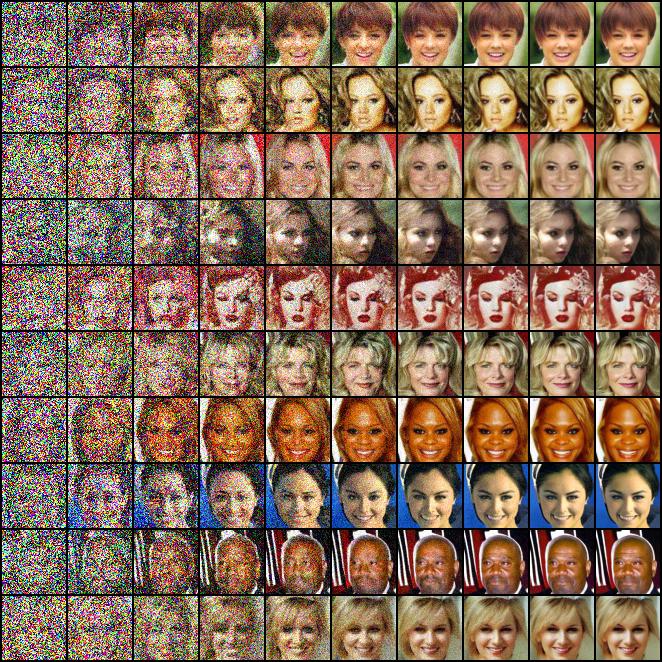}
        \caption{Trajectories of HR Annealed Langevin\\ on CelebA $64\times 64$  }
        \label{fig:scratchtraj}
    \end{subfigure}
 \begin{subfigure}[t]{0.5\textwidth}
        \centering 
        \includegraphics[width=0.70\textwidth]{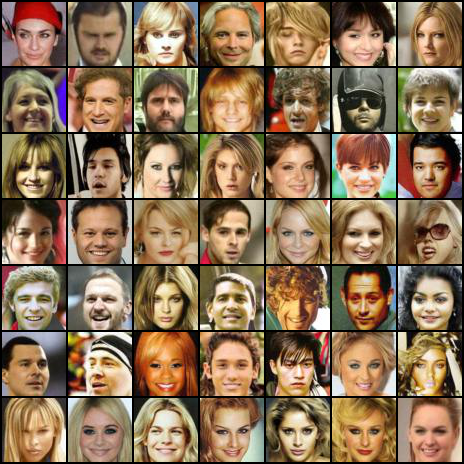}
        \caption{Samples from HR Annealed Langevin\\ on CelebA $64\times 64$ }
        \label{fig:celeba_samples}
    \end{subfigure}    
 \caption{Annealed Langevin  results in high quality  64x64 images (FID 20.17) and does not suffer from the curse of dimension giving evidence of the manifold hypothesis, and the graceful dependency of the mixing property of Langevin sampling on the intrinsic dimension of the image manifold.  }
\label{fig:scratch}
 \vspace{-1.em}
\end{figure}
\begin{figure}
\vspace{-0.2em}
    \begin{subfigure}[L]{0.4\textwidth}
    \centering
        \includegraphics[width=\textwidth]{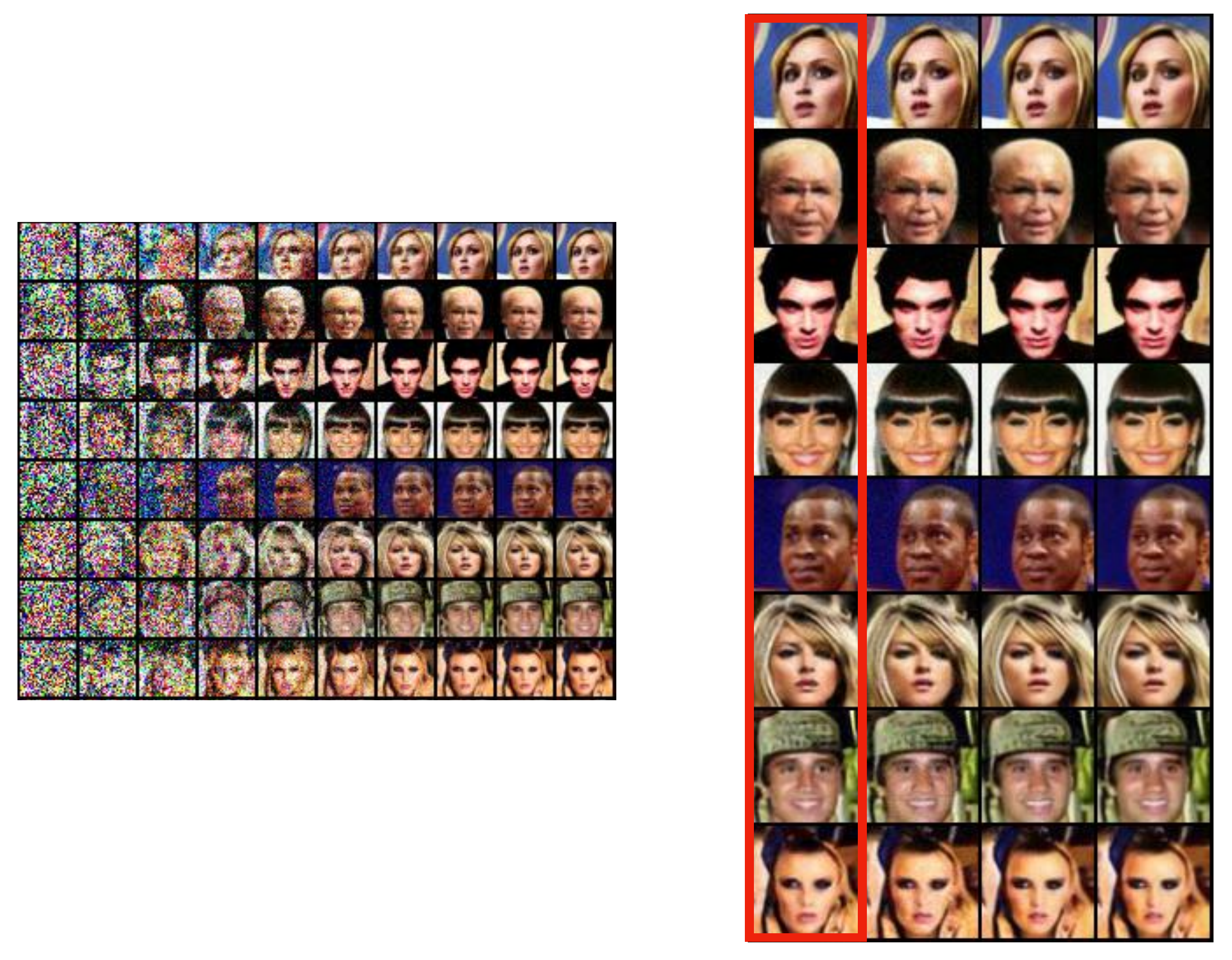}
        \caption{Trajectories of multi-resolution Langevin LRS-$\uparrow$-HRS-3}
        \label{fig:3traj}
    \end{subfigure}
    \begin{subfigure}[R]{0.6\textwidth}
    \centering
        \includegraphics[width=\textwidth]{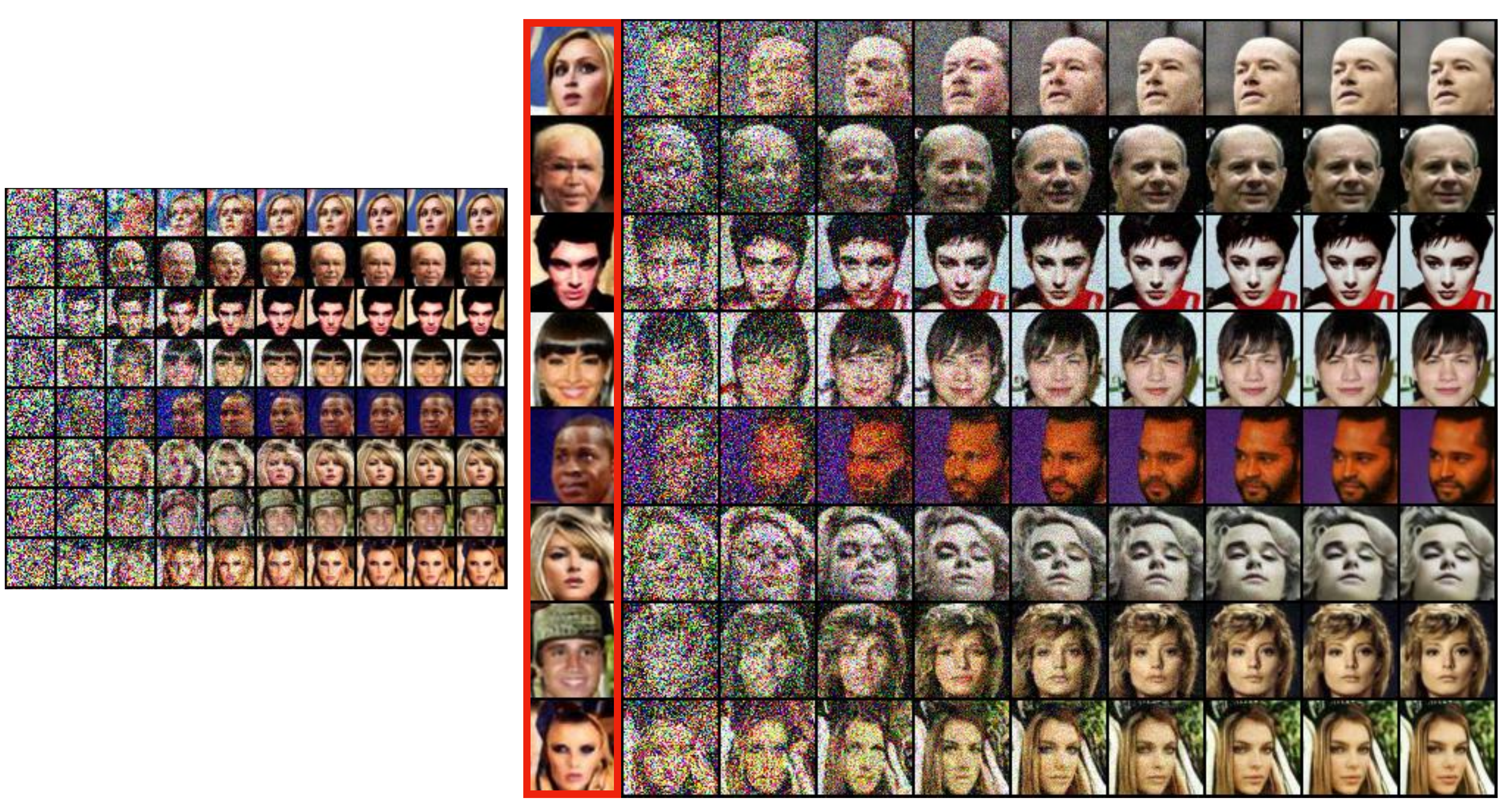}
        \caption{Trajectories of multi-resolution Langevin LRS-$\uparrow$-HRS-9}
        \label{fig:9traj}
    \end{subfigure}
    
    \caption{ Trajectories of Annealed  multi-resolution Langevin sampling: On the left of each panel the low resolution annealed Langevin, each column corresponds to a noise level. On the right of the panel, in red the upsampled image, followed by the high resolution annealed Langevin Sampling. In LRS-$\uparrow$-HRS-3 (Left panel), we use only the three smallest level of noises, that further sharpen the up-sampled images  resulting in the reduction in the FID from 37.2 to 32.46 wrt to LRS-$\uparrow$.In LRS-$\uparrow$-HRS-9 (right panel), we use 9 levels of noise starting from the second largest one, we see that the high resolution Langevin changes the identity of the face while keeping the poses and the facial expressions and results in a  reduction of FID from 37.2 to 26.44.}
    \label{fig:samples}
\vskip -0.25in    
\end{figure}

\section{Discussion}
    The experiments above provide further evidence, beyond that in \cite{SongErmon2019}, that the Langevin Dynamics (LD) mix quickly.  The ability to move from low to high resolution without appreciable effect on image quality provides strong evidence in favor of the intrinsic geometry of the image space being the determining factor in the mixing time.  Moreover, the lack of significant improvement using the mutli-resolution scheme suggests that the dependence on dimension in the final term of \Cref{eq2} in \Cref{lastthm} from \cite{DAESGLD} is loose, motivating further work in this direction. 
    
    Another interesting phenomenon can be observed in \Cref{fig:samples}(b).  The columns progress along the trajectory of annealed Langevin Dynamics, initialized at images upsampled from lower resolution.  It is apparent that general features, such as face angle, are preserved, while smaller details, such as color and background, are changed.  The nearest neighbors analysis of \cite{SongErmon2019} demonstrates that the algorithm is not merely learning the data set as attractors, suggesting that the trajectory of the LD is actually moving along the image manifold.  We leave to future work the job of further exploring this phenomenon and the acceleration of the above algorithm, where LD is run in a latent space representation, perhaps learned in a preprocessing step, more in the flavor of GANs. 

\section*{Broader Impact}
Methods for image generation have a broader impact on our understanding of the structure of natural images and have numerous applications in industry. However, it should be noted that any bias in the dataset is likely to be carried over to the image generation mechanism.

\begin{ack}

 We acknowledge the support from NSF under award DMS-1953181 as well as NSF Graduate Research Fellowship support under Grant No. 1122374. We would also like to acknowledge the support from the MIT-IBM Watson AI Lab.

\end{ack}

\bibliographystyle{plainnat}
\bibliography{BIB}

\newpage
\appendix
\section{Prerequisite Riemannian Geometry}
    We provide a brief review of the relevant concepts from Riemannian geometry.  For an excellent exposition on the topic, see \cite{Lee2018,DoCarmo1992}.
    
    We define an $n$-dimensional \emph{manifold} $M$ as a topological space along with a family $(U_\alpha, \phi_\alpha)$ where  $U_\alpha$ are open sets such that $M = \bigcup_\alpha U_\alpha$ and $\phi_\alpha: \mathbb{R}^n \to U_\alpha$ are bijections such that $\phi_\beta^{-1} \circ \phi_\alpha: \mathbb{R}^n \to \mathbb{R}^n$ are smooth functions.  The \emph{tangent space} of $M$ at a point $x \in M$ is the set of all vectors tangent to $M$ at $x$.  The \emph{tangent bundle} $\mathcal{T}M$ is the set of all pairs $(x, v)$ such that $x \in M$ and $v$ is in the tangent space of $M$ at $x$.  The differential of any smooth map $\phi: M \to N$ between smooth manifolds is a linear map on the tangent bundle sending a vector $v \in \mathcal{T}M_x$ to a vector $v' \in \mathcal{T}N_{f(x)}$.  Crucially, a metric allows us to define a notion of distance on $M$.  Let $I = [0,1]$ be the unit interval and let $\gamma: I \to M$ be a piecewise differentiable map.  Then we note that for any $t \in (0,1)$, $\gamma'(t) \in \mathcal{T}M_{\gamma(t)}$.  Thus the Riemannian metric allows us to define the \emph{length} of $\gamma$ to be
    \begin{equation}
        \ell(g) = \int_0^1 g(\gamma'(t), \gamma'(t)) d t
    \end{equation}
    We can then define a metric $\rho$ on $M$ such that for all $x,y \in M$, $\rho(x,y)$ is the infimum of the length of $\gamma$ taken over the set of curves such that $\gamma(0) = x$ and $\gamma(1)=  y$.  We define the \emph{diameter} of $M$ as the supremum over all $x, y \in M$ of $\rho(x,y)$.  We say that $M$ is \emph{complete} if it is complete with respect to the metric $\rho$.
    
    Given a manifold, we define a \emph{Riemannian metric} as a symmetric 2-tensor that induces an inner product on the tangent space to $M$ at any point $x \in M$.  We call the pair $(M, g)$ a Riemannian manifold.  If $\phi: M \to N$ is a smooth bijection, and $g$ is a Riemannian metric on $M$, then $\phi$ induces a Riemannian metric on $N$ by pushforward through the differential of $\phi$.  If $N$ has a Riemannian metric $g'$ and $\phi g = g'$ then we say that $\phi$ is an \emph{isometry}.  If $\phi$ induces a homeomorphism onto its image then we say that $\phi$ is an \emph{isometric embedding}.
    \begin{remark}
        We care about maps $\phi: M \to N \subset \mathbb{R}^n$ given by inclusion.  There is a natural Riemannian metric on Euclidean space, namely the standard inner product, and this induces a metric on $N$.  The key point in the conditions in Section 4 is that this induced metric agrees with the Riemannian metric of $M$ when considered as an abstract manifold; this is why we require the inclusion to be isometric in the fundamental assumption made at the beginning of this section.
    \end{remark}
    We say that $M$ is \emph{connected} if it is connected as a topological space.  Recall that a metric induces a volume form, whose value at a point $x \in M$ is given by the square root of the absolute value of the determinant of $g$ at this point, i.e., $\vol_M = \sqrt{|g|}$.
    
    We recall that the Levi-Civita connection is the unique symmetric connection compatible with the metric $g$.  With the notion of derivative defined, we are able to extend the classical differetnial operators to the setting of Riemannian manifolds.  In particular, we may define the gradient and the divergence with respect to the Levi-Civita connection.  We then may define the metric Laplacian, or \emph{Laplace-Beltrami operator} as the divergence of the gradient, both with respect to the Levi-Civita connection.  The \emph{Hessian}, denoted by $H_\cdot$ is the covariant derivative iterated twice.  We then define the \emph{Riemannian curvature tensor} as the tensor endomorphism parametrized by $X,Y$ such that
    \begin{equation*}
        R(X,Y)(Z) = \nabla_X \nabla_Y Z - \nabla_Y \nabla_X Z - \nabla_{[X,Y]} Z
    \end{equation*}
    Fixing a frame, there are four coordinates relevant to this map, corresponding to the two vector fields that parametrize the map $R(\cdot, \cdot)$, as well as the input and output of this endomorphism.  We define the \emph{Ricci curvature} as the two tensor taking in two vector fields and returning a real number:
    \begin{equation*}
        \ric(X, Y) = \trace(Z \mapsto R(Z, X)(Y))
    \end{equation*}
    One way of thinking about the Ricci tensor (after fixing a frame) is as a function that takes points $x$ in $M$ to matrices on the tangent space of $M$ at $x$.  Thus, after fixing a point $x \in M$, we can apply the machinery of linear algebra to $\ric(X, Y)$ such as characterizations of eigenvalues and positive definiteness.  If we make such a statement about the Ricci tensor, we are saying that the property stated holds uniformly for all points in $M$.  Finally, we define the scalar curvature as the trace of the Ricci tensor; thus, the scalar curvature is a real valued function on the manifold.
    
    We conclude this brief review with a statement of the famous Bishop-Gromov Comparison Theorem.  This is well known and can be found in any book on comparison geometry; we refer to the notes in \cite{Ohta2014} for an excellent introduction to the basic theory at an elementary level:
    \begin{theorem}[Bishop-Gromov]\label{bishopgromov}
        Let $(M, g)$ a $d'$-dimensional Riemannian manifold such that the scalar curvature is bounded below by $-K$ for some $K > 0$.  For any point $x \in M$, let $B_r(x)$ denote the metric ball around $x$ in $M$ with radius $r > 0$.  For any $0 < r < R$, we have
        \begin{equation}
            \frac{\vol_M(B_R(x))}{\vol_M(B_r(x))} \leq \frac{\int_0^R s(u)^{d'-1} d u}{\int_0^r s(u)^{d'-1} d u}
        \end{equation}
        where 
        \begin{equation}
            s(u) = \sinh(u \sqrt{K})
        \end{equation}
    \end{theorem}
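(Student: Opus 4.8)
The plan is to establish the volume ratio by the classical Jacobian comparison in geodesic polar coordinates, reducing the global statement to a one-dimensional comparison along each geodesic ray. Before starting I note that the argument genuinely needs a \emph{radial Ricci} lower bound, since the quantity $\ric(\dot\gamma,\dot\gamma)$ is what enters the computation; I therefore read the hypothesis in the sense of \Cref{as1}, namely $\ric_M \succeq -K g$ (a scalar bound alone would not suffice), and absorb the dimensional normalization constant between $-Kg$ and the model curvature into $\sqrt{K}$, where it cancels in the final ratio. Fixing $x \in M$, I pull back $\vol_M$ through $\exp_x$. Away from the cut locus $\mathrm{Cut}(x)$, which has measure zero, each point is uniquely $\exp_x(r\theta)$ with $r>0$ and $\theta \in S^{d'-1} \subset \mathcal{T}M_x$, and $d\vol_M = \mathcal{A}(r,\theta)\, dr\, d\theta$, where $\mathcal{A}(r,\theta)$ is the Jacobian of the exponential map (equivalently the area density of the geodesic sphere). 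Extending $\mathcal{A}(\cdot,\theta)$ by $0$ past the cut distance $c(\theta)$, I obtain $\vol_M(B_R(x)) = \int_{S^{d'-1}} \int_0^R \mathcal{A}(r,\theta)\, dr\, d\theta$, so the entire problem becomes a statement about the radial densities.

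Next I derive the key differential inequality. Along $\gamma_\theta(r) = \exp_x(r\theta)$, expressing $\mathcal{A}$ through the Jacobi fields vanishing at $r=0$ gives $\partial_r \log \mathcal{A} = H$, the mean curvature (the trace of the shape operator $U = \nabla \partial_r$) of the geodesic sphere. Tracing the matrix Riccati equation $U' + U^2 + R_{\dot\gamma} = 0$ yields $H' + \trace(U^2) + \ric(\dot\gamma,\dot\gamma) = 0$. Applying the Cauchy--Schwarz bound $\trace(U^2) \geq H^2/(d'-1)$ together with the Ricci bound $\ric(\dot\gamma,\dot\gamma) \geq -(d'-1)K$ produces the scalar inequality $H' + H^2/(d'-1) \leq (d'-1)K$. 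In the model space of constant curvature $-K$ the density is exactly $s(r)^{d'-1}$ and its mean curvature $h(r) = (d'-1)s'(r)/s(r)$ satisfies the same relation with equality.

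A Sturm/Riccati comparison (see \cite{Ohta2014}), initialized by the common boundary behavior $H(r), h(r) \sim (d'-1)/r$ as $r \to 0^+$, then gives $H(r,\theta) \leq h(r)$ for all $r < c(\theta)$. Hence $\partial_r \log\big(\mathcal{A}(r,\theta)/s(r)^{d'-1}\big) = H - h \leq 0$, so $r \mapsto \mathcal{A}(r,\theta)/s(r)^{d'-1}$ is nonincreasing, and the extension by $0$ past $c(\theta)$ only makes this ratio drop, preserving monotonicity. Equivalently, $\mathcal{A}(R,\theta)\, s(u)^{d'-1} \leq \mathcal{A}(u,\theta)\, s(R)^{d'-1}$ whenever $u \leq R$. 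Integrating this over $u \in [0,R]$ and $\theta \in S^{d'-1}$ gives $V'(R) v(R) \leq v'(R) V(R)$, where $V(R) = \vol_M(B_R(x))$ and $v(R) = \vol(S^{d'-1}) \int_0^R s(u)^{d'-1}\, du$; thus $V/v$ is nonincreasing. Evaluating at $r < R$ yields $V(R)/V(r) \leq v(R)/v(r)$, and the $\vol(S^{d'-1})$ factors cancel to leave exactly the asserted inequality.

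The hard part will be the analysis surrounding the cut locus and the initial singularity, rather than any of the algebra above. I will need to justify rigorously that extending $\mathcal{A}$ by zero past $c(\theta)$ preserves monotonicity --- which rests on the fact that beyond the cut distance $\exp_x$ ceases to be a diffeomorphism, so no volume can be recovered --- and to control the $r \to 0^+$ blow-up of $H$ in order to initialize the Riccati comparison. Once these boundary issues and the elementary trace inequality $\trace(U^2) \geq (\trace U)^2/(d'-1)$ are handled, the remainder is routine.
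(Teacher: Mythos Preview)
The paper does not supply its own proof of this statement: after the corollary it simply says that proofs of both results ``are available in the third section of \cite{Ohta2014}.'' Your sketch is the standard geodesic-polar-coordinate proof (exactly the argument one finds in \cite{Ohta2014}), and it is essentially correct; you also correctly flag that the hypothesis must be the Ricci lower bound of \Cref{as1} rather than a scalar-curvature bound as literally written.

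One small point worth tightening: your inequality $\ric(\dot\gamma,\dot\gamma)\geq -(d'-1)K$ does not follow from $\ric_M\succeq -Kg$, which only gives $\ric(\dot\gamma,\dot\gamma)\geq -K$ on unit vectors. With that normalization the comparison density is $\sinh\!\big(u\sqrt{K/(d'-1)}\big)^{d'-1}$, which is indeed how the paper itself uses the result later in the proof of \Cref{diameterbound}. The mismatch is in the paper's statement, not in your argument; the monotonicity of $V/v$ that you derive holds verbatim once $s$ is matched to the assumed Ricci bound, and the normalization constant does \emph{not} cancel in the ratio as you suggest---it determines the comparison function---so just be explicit about which convention you are adopting.
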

    An immediate corollary of the above is
    \begin{corollary}
        In the situation of \Cref{bishopgromov}, letting $S_r(x) = \partial B_r(x)$ be the metric $r$-sphere and $\vol_M'$ the induced volume form,
        \begin{equation}
            \frac{\vol_M'(S_R(x))}{\vol_M'(S_r(x))} \leq \frac{s(R)^{d'-1}}{s(r)^{d'-1}}
        \end{equation}
    \end{corollary}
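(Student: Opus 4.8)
The plan is to deduce the sphere comparison not from the integrated conclusion of \Cref{bishopgromov} but from the pointwise monotonicity that its proof actually establishes: for each fixed $x\in M$ the ratio
\begin{equation*}
    h(r) \;=\; \frac{\vol_M'(S_r(x))}{s(r)^{d'-1}}
\end{equation*}
is non-increasing on $(0,\infty)$, and the claimed inequality is precisely $h(R)\le h(r)$ for $0<r<R$. (Note one cannot hope to get the sphere inequality from the ball inequality alone: the monotone-ratio statement is strictly stronger, since exponential growth of the comparison volumes allows the ball ratio to stay monotone even when $\vol_M'(S_r(x))/s(r)^{d'-1}$ is not — so the real content is to recall why $h$ is monotone.)

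To establish the monotonicity I would pass to geodesic polar coordinates centred at $x$. Identify $M\setminus\mathrm{Cut}(x)$ with $\{(r,\theta): 0<r<c(\theta),\ \theta\in S^{d'-1}\subset\mathcal{T}M_x\}$ via $\exp_x$, where $c(\theta)$ is the cut distance in direction $\theta$, and write the Riemannian volume as $J(r,\theta)\,dr\,d\theta$ with $J$ smooth and positive on this region, extended by $J(r,\theta)=0$ for $r\ge c(\theta)$. The standard first-variation/Jacobi-field computation gives $\partial_r\log J(r,\theta)=\Delta r$, and the Laplacian comparison theorem under the Ricci lower bound of \Cref{as1} gives $\Delta r\le (d'-1)\,s'(r)/s(r)$ for the same model function $s$ appearing in \Cref{bishopgromov}. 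Hence $\partial_r\log\bigl(J(r,\theta)/s(r)^{d'-1}\bigr)\le 0$, so $r\mapsto J(r,\theta)/s(r)^{d'-1}$ is non-increasing along each ray, the jump down to $0$ at $r=c(\theta)$ only reinforcing this. Integrating over $\theta$ and using that $\mathrm{Cut}(x)$ is closed of zero $\vol_M$-measure so that $\vol_M'(S_r(x))=\int_{S^{d'-1}}J(r,\theta)\,d\theta$, we get that $h(r)$ is non-increasing, which is the corollary. As a consistency check, integrating $\vol_M'(S_u(x))=h(u)s(u)^{d'-1}$ against $du$ and applying the elementary fact that $\int_0^R f/\int_0^r f\le \int_0^R g/\int_0^r g$ whenever $f/g$ is non-increasing recovers \Cref{bishopgromov} itself, via $\vol_M(B_r(x))=\int_0^r\vol_M'(S_u(x))\,du$.

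The main obstacle is entirely the bookkeeping around the cut locus: one must invoke carefully that $\exp_x$ is a diffeomorphism from $\{0<r<c(\theta)\}$ onto $M\setminus\mathrm{Cut}(x)$, that $\mathrm{Cut}(x)$ is measure-zero and closed, and that the coarea decomposition $dV_g=J\,dr\,d\theta$ together with the definition of $\vol_M'(S_r(x))$ as the induced $(d'-1)$-volume of the geodesic sphere is valid, so that the ``$\partial_r(\log\,\cdot)\le 0$ plus a downward jump at $c(\theta)$'' argument is a legitimate statement about the genuine sphere areas and not merely their smooth parts. All of this is classical but should be stated explicitly. Crucially, none of it involves the ambient dimension $d$, so the comparison is genuinely intrinsic, which is the point for the applications in \Cref{sec3}.
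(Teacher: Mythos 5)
Your argument is correct and is essentially the proof the paper itself relies on: the paper does not prove the corollary but defers to the third section of \cite{Ohta2014}, where exactly this argument — monotonicity of $J(r,\theta)/s(r)^{d'-1}$ along rays via Laplacian (Jacobi-field) comparison in geodesic polar coordinates, with the cut-locus bookkeeping you describe, followed by integration over directions — is carried out. Your observation that the sphere comparison is logically prior to, rather than a consequence of, the integrated ball inequality is also well taken, and using the model function of \Cref{bishopgromov} in the Laplacian comparison is legitimate since the comparison bound is monotone in the curvature constant.
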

    Proofs of both results are available in the third section of \cite{Ohta2014}.
\section{Langevin Diffusion and the log-Sobolev Inequality}
    We briefly review the prerequisite information about both Langevin sampling and the log-Sobolev inequality.  More information and proofs can be found in the excellent exposition of \cite{Bakry2014}.
    
    Recall that, in $\mathbb{R}^d$, we may consider the Langevin diffusion process as the solution to the following stochastic differential equation:
    \begin{equation}\label{eq:sde}
        d X_t = \nabla \log p(X_t) d t + \sqrt{2} d B_t
    \end{equation}
    where $p$ is the density of a probability distribution on $\mathbb{R}^d$.  Letting $\nu_t$ be the law of $X_t$, we note that under relatively weak conditions, $\nu_t \to \nu_\infty = p$ in distribution.  If the drift is Lipschitz, then there is a unique diffusion satisfying \Cref{eq:sde} (see, for example, \cite{Karatzas1991}).  We define the \emph{generator} of this diffusion $\mathcal{L}$ as the second order differential operator:
    \begin{equation}\label{eq3}
        \mathcal{L}f = \Delta f + \langle \nabla \log p, \nabla f \rangle
    \end{equation}
    for all $f \in C_0^2(\mathbb{R})$.  Associated to this generator is the \emph{Dirichlet form} defined as follows:
    \begin{equation}
        \mathcal{E}(f) = -\int f \mathcal{L} f d p = \int ||\nabla f||^2 d p
    \end{equation}
    The \emph{entropy} of a nonnegative function is defined as 
    \begin{equation}
        \ent(f) = \mathbb{E}\left[ f \log f\right] - \mathbb{E}[f] \log \mathbb{E}[f]
    \end{equation}
    Note that if $f$ has mean 1 then the second term drops out.  Our purpose in the above exposition is to introduce the log-Sobolev inequality.  We say that $p$ satisfies a \emph{log-Sobolev inequality} with log-Sobolev constant $c_{LS}$ if for all $f \in C_0^2(\mathbb{R}^d)$,
    \begin{equation}
        \ent(f^2)\leq c_{LS} \mathcal{E}(f)
    \end{equation}
    This seemingly simple inequality is the key to fast mixing in Wasserstein distance for the Langevin diffusion.  We note that if we suppose that $\mathbb{E}[f^2(X)] = 1$, then we may consider the probability measure $\mu \ll p$ such that $\frac{d\mu}{d p} = f^2$.  Then the log-Sobolev inequality becomes equivalent to
    \begin{equation}
        KL(\mu||p) \leq c_{LS} \mathcal{E}\left(\sqrt{\frac{d \mu}{d p}}\right)
    \end{equation}
    for all $\mu \ll p$.  In fact, this seemingly special case implies the general result.
    
    The reason that we care about log-Sobolev inequalities is that they imply fast mixing in Wasserstein distance.  Recalling that we denote by $\nu_t$ the law of $X_t$, the Langevin diffusion at point $t$, we have the following theorem:
     \begin{theorem}
        Let $p$ be a density on $\mathbb{R}^d$ that satisfies a log-Sobolev inequality with constant $c_{LS}$.  Let $\nu_t$ be the law of the Langevin diffusion at time $t$, initialized with law $\nu_0$.  Then for all $t$, we have the following inequality:
        \begin{equation}
            \mathcal{W}_2(\nu_t, p) \leq \mathcal{W}_2(\nu_0, p) e^{-\frac{2 t}{c_{LS}}}
        \end{equation}
    \end{theorem}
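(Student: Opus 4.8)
\emph{Proof plan.} The plan is to pass from Wasserstein distance to relative entropy, where the log-Sobolev inequality acts directly, carry out a Gr\"onwall argument there, and then pass back via a transportation inequality. Write $H_t := KL(\nu_t\|p)$ for the relative entropy of $\nu_t$ with respect to $p$, and $I_t := \int \|\nabla\log(d\nu_t/dp)\|^2\,d\nu_t$ for the corresponding relative Fisher information.

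First I would record the entropy dissipation identity. Since $\nu_t$ solves the Fokker--Planck (forward Kolmogorov) equation of the diffusion \eqref{eq:sde}, which can be written as $\partial_t\nu_t = \Delta\nu_t - \nabla\cdot(\nu_t\nabla\log p) = \nabla\cdot(\nu_t\nabla\log(d\nu_t/dp))$, differentiating $H_t$ under the integral sign and integrating by parts yields the de Bruijn-type identity $\tfrac{d}{dt}H_t = -I_t$; the standing Lipschitz-drift hypothesis provides enough regularity and decay at infinity to justify these manipulations (see \cite{Bakry2014}). Next I would apply the log-Sobolev inequality to the test function $f = \sqrt{d\nu_t/dp}$: since $\mathcal{E}(f) = \int\|\nabla f\|^2\,dp = \tfrac{1}{4}I_t$ and $\ent(f^2) = H_t$ (the mean term drops out because $\int (d\nu_t/dp)\,dp = 1$), the inequality $\ent(f^2)\le c_{LS}\,\mathcal{E}(f)$ becomes $H_t \le \tfrac{c_{LS}}{4}I_t$. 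Combining the two displays gives $\tfrac{d}{dt}H_t \le -\tfrac{4}{c_{LS}}H_t$, and Gr\"onwall's lemma yields $H_t \le H_0\,e^{-4t/c_{LS}}$.

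Finally I would transfer this back to Wasserstein distance via the Otto--Villani theorem: a density satisfying a log-Sobolev inequality with constant $c_{LS}$ also satisfies Talagrand's $T_2$ transportation inequality $\mathcal{W}_2(\mu,p)^2 \le c_{LS}\,KL(\mu\|p)$ for every $\mu$. Applying this at time $t$ gives $\mathcal{W}_2(\nu_t,p)^2 \le c_{LS}H_t \le c_{LS}H_0\,e^{-4t/c_{LS}}$, hence $\mathcal{W}_2(\nu_t,p)\le \sqrt{c_{LS}H_0}\,e^{-2t/c_{LS}}$, which is the asserted decay at rate $2/c_{LS}$, with the prefactor expressed through the initial data. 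When $\nu_0$ has extra structure --- in particular when a Bakry--\'Emery curvature lower bound holds, as in the convex cases discussed in the body --- one can alternatively couple $X_t$ synchronously with a stationary copy $Y_t$, use concavity of $\log p$ along the coupling to get $\mathbb{E}\|X_t - Y_t\|^2 \le e^{-4t/c_{LS}}\,\mathbb{E}\|X_0 - Y_0\|^2$, and optimize over couplings of $\nu_0$ and $p$, which recovers the statement verbatim.

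I expect the two differential steps to be routine once the analytic bookkeeping (finiteness of $H_t$ and $I_t$, and the decay needed for the integration by parts) is handled; the one genuinely nonelementary ingredient is the Otto--Villani implication $\mathrm{LSI}\Rightarrow T_2$ in the last step (equivalently, a Hamilton--Jacobi/HWI semigroup argument), which is exactly what accounts for the square root in moving between the information-theoretic and transport pictures, so I would cite it rather than reprove it.
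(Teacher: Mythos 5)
First, a point of comparison: the paper never proves this statement at all --- it appears in the background appendix as a classical fact quoted from \cite{Bakry2014} --- so the only question is whether your argument actually delivers the inequality as written. Your route (de Bruijn entropy dissipation, LSI applied to $f=\sqrt{d\nu_t/dp}$, Gr\"onwall, then Otto--Villani $T_2$) is the standard one and each individual step is sound. The issue is the conclusion: what you obtain is
\begin{equation*}
\mathcal{W}_2(\nu_t,p) \;\le\; \sqrt{c_{LS}\,KL(\nu_0\|p)}\;e^{-2t/c_{LS}},
\end{equation*}
and the prefactor $\sqrt{c_{LS}\,KL(\nu_0\|p)}$ cannot be replaced by $\mathcal{W}_2(\nu_0,p)$ by any step you have available: Talagrand's inequality runs in the direction $\mathcal{W}_2(\nu_0,p)^2\le c_{LS}\,KL(\nu_0\|p)$, so your prefactor is \emph{at least} as large as the one claimed, not at most. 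The gap is not cosmetic --- for a deterministic initialization (or any $\nu_0$ not absolutely continuous with respect to $p$) one has $KL(\nu_0\|p)=\infty$ while $\mathcal{W}_2(\nu_0,p)<\infty$, so your bound is vacuous exactly where the stated inequality still says something. Your fallback via synchronous coupling does recover the statement verbatim, but only under a Bakry--\'Emery curvature lower bound, which is strictly stronger than the LSI hypothesis of the theorem and is precisely the convex regime the paper is trying to avoid.

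So as a proof of the literal statement there is a genuine gap: exponential $\mathcal{W}_2$-contraction with initial prefactor $\mathcal{W}_2(\nu_0,p)$ is a semigroup contraction property that does not follow from LSI by the entropy/$T_2$ argument; under LSI alone the honest conclusion is the entropy-prefactor bound you derived (or, equivalently, decay of $KL$ at rate $4/c_{LS}$ combined with $T_2$ at time $t$). The fair resolution is to say explicitly that the theorem as stated is the loose classical formulation, prove the version with prefactor $\sqrt{c_{LS}\,KL(\nu_0\|p)}$ --- which is all that is ever used downstream, since only the exponential rate $2/c_{LS}$ enters the mixing-time discussion --- or else add the curvature hypothesis under which your coupling argument applies. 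As written, your proposal proves a correct and useful statement, but not the one displayed.
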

    Thus the log-Sobolev constant governs the mixing time for the Langevin diffusion.  
    
    The above discussion was based in Euclidean space, but the same notions carry over when generalized to Riemannian manifolds $(M, g)$.  We refer the reader to \cite[\S 3.2]{Bakry2014} for the details.  In this case, we define the generator of the Langevin diffusion as
    \begin{equation}
        \mathcal{L} f = \Delta f + \langle \nabla f, \nabla \log p \rangle_g
    \end{equation}
    where $\Delta$ is the Laplace-Beltrami operator, the gradient is with respect to the Levi-Civita connection, and the inner product is with respect to the Riemannian metric $g$.  Note that in the case that $(M, g)$ is Euclidean space, this reduces to \Cref{eq3}.  With the generator and the distribution, we may define the Dirichlet form and, consequently, the log-Sobolev inequality entirely analogously.  The famous Bakry-Emery criterion (\cite{Emery1985}) guarantees a dimension independent constant in the case of strictly positive curvature.  One version is as follows:
    \begin{theorem}[Bakry-Emery criterion]
        Let $(M,g)$ be a Riemannian manifold and let $p = p d \vol_M$ be a probability density.  Denote by $H$ the Hessian of $\log p$ with respect to the Levi-Civita connection of $g$.  Suppose that for all $x \in M$, $\ric_M(x) - H(x) \geq \alpha g(x)$ in the sense of quadratic forms for some $\alpha > 0$.  Then $p$ satisfies a log-Sobolev inequality with constant $c_{LS} \leq \frac 1\alpha$.
    \end{theorem}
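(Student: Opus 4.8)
The plan is to prove this via the Bakry--Émery $\Gamma_2$-calculus together with a semigroup interpolation argument; the statement is classical, and the proof I would give follows the standard strategy (see \cite{Bakry2014}).

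First, I would set up the carré du champ operator of the generator $\mathcal{L}f = \Delta f + \langle \nabla f, \nabla \log p\rangle_g$, namely $\Gamma(f) = \tfrac12\left(\mathcal{L}(f^2) - 2f\mathcal{L}f\right) = \|\nabla f\|_g^2$, together with its iterate $\Gamma_2(f) = \tfrac12\left(\mathcal{L}\Gamma(f) - 2\Gamma(f,\mathcal{L}f)\right)$, where $\Gamma(\cdot,\cdot)$ is the polarization of $\Gamma$. The geometric input is the Bochner--Weitzenböck identity, which for this weighted Laplacian reads $\Gamma_2(f) = \|H_f\|_{\mathrm{HS}}^2 + \left(\ric_M - H\right)(\nabla f, \nabla f)$, with $H_f$ the Riemannian Hessian of $f$ and $H$ the Hessian of $\log p$; it follows from the classical Bochner formula for $\tfrac12\Delta\|\nabla f\|_g^2$ after accounting for the extra terms generated by the drift $\langle \nabla \log p, \nabla\cdot\rangle$. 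Since $\|H_f\|_{\mathrm{HS}}^2 \ge 0$, the hypothesis $\ric_M(x) - H(x) \succeq \alpha g(x)$ immediately yields the curvature--dimension inequality $\Gamma_2(f) \ge \alpha\,\Gamma(f)$ pointwise, i.e.\ the condition $CD(\alpha,\infty)$.

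Next, I would convert $CD(\alpha,\infty)$ into exponential decay of Fisher information along the heat flow. Let $(P_t)_{t\ge0}$ be the Markov semigroup generated by $\mathcal{L}$, with invariant probability measure $p\,\vol_M$; since $M$ is closed, complete and $p$ is smooth and positive, $P_t$ is conservative, ergodic and regularizing, so $P_tf \to \int f\, dp$ smoothly as $t\to\infty$. For smooth positive $f$ with $\int f\,dp = 1$, the entropy--dissipation identity $\frac{d}{dt}\ent(P_tf) = -I(P_tf)$, with Fisher information $I(g) = \int \Gamma(g)/g\, dp$, follows by differentiating and integrating by parts using invariance of $p$; integrating in $t$ and using $\ent(P_tf)\to 0$ gives the representation $\ent(f) = \int_0^\infty I(P_uf)\, du$. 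A further differentiation of $u\mapsto I(P_uf)$, now using reversibility of $p$ and applying $\Gamma_2 \ge \alpha\Gamma$ to $\log P_uf$, yields $\frac{d}{du}I(P_uf) \le -2\alpha\, I(P_uf)$, hence $I(P_uf) \le e^{-2\alpha u}\, I(f)$.

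Finally, integrating this bound over $u\in[0,\infty)$ gives $\ent(f) \le \frac{1}{2\alpha}\,I(f)$, and substituting $f^2$ for $f$, so that $I(f^2) = 4\int\|\nabla f\|_g^2\, dp = 4\,\mathcal{E}(f)$, produces the log-Sobolev inequality with log-Sobolev constant of order $1/\alpha$, which is the claimed bound. I expect the main obstacle to be analytic rather than geometric: rigorously justifying the semigroup manipulations — the smoothness and strict positivity of $P_tf$, differentiation under the integral sign, and the vanishing of boundary contributions in the integrations by parts — which is precisely where the closedness and completeness of $M$ and the regularity of $p$ are used. The Bochner identity and the passage to $CD(\alpha,\infty)$ are routine once the $\Gamma_2$ formalism is in place.
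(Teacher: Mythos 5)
The paper does not prove this statement at all: it is quoted as the classical Bakry--\'Emery criterion with a citation to \cite{Emery1985,Bakry2014}, so there is no internal proof to compare against. Your proposal is the standard semigroup proof of that classical result, and its structure is sound: the identification of $\Gamma$ and $\Gamma_2$ for the weighted generator, the Bochner--Weitzenb\"ock identity giving $\Gamma_2(f)=\|H_f\|_{\mathrm{HS}}^2+(\ric_M-H)(\nabla f,\nabla f)$, the resulting $CD(\alpha,\infty)$ condition, de Bruijn's identity, exponential decay of Fisher information, and integration in time. On a closed manifold with smooth positive density (the setting the paper actually works in, cf.\ Assumption 1) the analytic justifications you flag (positivity and smoothness of $P_tf$, integration by parts without boundary terms, ergodicity) are indeed routine, so the argument goes through.

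One quantitative point deserves correction rather than the phrase ``of order $1/\alpha$, which is the claimed bound.'' Your chain of estimates gives $\ent(f)\le \frac{1}{2\alpha}I(f)$ and hence, after substituting $f^2$ and using $I(f^2)=4\,\mathcal{E}(f)$, the inequality $\ent(f^2)\le \frac{2}{\alpha}\,\mathcal{E}(f)$. Under the paper's normalization of the log-Sobolev inequality, $\ent(f^2)\le c_{LS}\,\mathcal{E}(f)$, this is $c_{LS}\le 2/\alpha$, not $1/\alpha$, and the factor cannot be removed: the standard Gaussian ($\alpha=1$) saturates $\ent(f^2)\le 2\,\mathcal{E}(f)$. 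The constant $1/\alpha$ in the statement corresponds to the other common convention $\ent(f^2)\le 2c\,\mathcal{E}(f)$; note that the paper itself later uses $c_{LS}(\gs)=2\sigma^2=2/\alpha$ for the Gaussian, consistent with what your proof actually delivers. So your argument is correct, but you should state the conclusion as $c_{LS}\le 2/\alpha$ in the paper's normalization (or flag the normalization mismatch in the quoted statement), rather than claiming to have recovered $1/\alpha$.
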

    Note that in the case that $(M, g)$ is just Euclidean space, the Ricci tensor vanishes and the statement reduces to the density being strictly log-concave.
    
    Finally, we recall the notion of spectral gap.  With the generator of the diffusion defined above, we note that, when applied to a constant function $c$, we have $\mathcal{L}c = 0$.  We may then ask what is the next smallest eigenvalue.  In many cases, this smallest eigenvalue is bounded away from zero.  We define the \emph{spectral gap} of the operator, $\lambda^\ast$ as
    \begin{equation}
        \lambda^* = \inf\left\{\mathcal{E}(f) | \int_M f d p = 0 \text{ and } \int_M f^2 d p = 1  \right\} 
    \end{equation}
    The spectral gap of various operators, especially the Laplace-Beltrami operator, is of enourmous interest in certain fields of geometric analysis.  An exhaustive list of references would be tediously long, but bounds on the spectral gap of the Laplace-Beltrami operator can be found in \cite{SGapChen1997,SGapYau,SGapChen2011}, among many others.  In particular, we have the following result, whose proof can be found in \cite{SGapChen2011}:
    \begin{theorem}\label{sgap}
        Let $(M,g)$ be a $d'$-dimensional, connected, compact manifold such that for all points $x \in M$, $\ric_M(x) \geq - K g$ in the sense of quadratic forms and with diameter $D$.  Let $\lambda^*$ be the spectral gap of the Laplace-Beltrami operator.  Then
        \begin{equation}
            \frac 1{\lambda^*} \leq \frac{D^2}{\pi^2} e^{\frac D2 \sqrt{K(d'-1)}}
        \end{equation}
    \end{theorem}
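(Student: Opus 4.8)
The plan is to identify $\lambda^*$ with the first nonzero eigenvalue $\lambda_1$ of $-\Delta$ (eigenfunction $u$, $\int_M u\,d\vol_M=0$) and reduce the estimate to a one-dimensional model problem on an interval of length $D$, in the gradient-estimate tradition of Li--Yau and Zhong--Yang. First I would prove a pointwise gradient estimate for $u$. Normalizing so that $\sup_M u = 1$ and setting $\beta = -\inf_M u \in (0,1]$, one studies the auxiliary function $G = |\nabla u|^2/\varphi(u)$ for a suitable positive $\varphi$ on $[-\beta,1]$, uses B\"ochner's formula together with $\Delta u = -\lambda_1 u$ and $\ric_M \succeq -Kg$,
\begin{equation*}
\tfrac12\Delta|\nabla u|^2 = |\mathrm{Hess}\,u|^2 + \langle\nabla u,\nabla\Delta u\rangle + \ric_M(\nabla u,\nabla u)\ \ge\ \tfrac{(\Delta u)^2}{d'} - (\lambda_1+K)|\nabla u|^2,
\end{equation*}
the refined Cauchy--Schwarz bound on $|\mathrm{Hess}\,u|^2$, and the maximum principle at an interior maximum of $G$. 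Choosing $\varphi$ to match the constant-curvature $-K/(d'-1)$ comparison model makes the curvature enter the eventual conclusion multiplicatively rather than additively.

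Second, integrating the resulting gradient estimate along a minimal geodesic joining a minimum point of $u$ to a maximum point (length at most $D$), and using the Zhong--Yang barrier refinement to recover the sharp behaviour at the two extrema, one obtains the comparison
\begin{equation*}
\lambda_1(M)\ \ge\ \mu_1,
\end{equation*}
where $\mu_1$ is the first nonzero Neumann eigenvalue on $[-D/2,D/2]$ of the weighted operator $f\mapsto J^{-1}(Jf')'$ with weight $J(t)=\cosh^{d'-1}\!\big(t\sqrt{K/(d'-1)}\big)$. This is the one-dimensional comparison theorem of Kr\"oger / Bakry--Qian type, and I regard establishing it cleanly as the main obstacle: it is where all the Riemannian input (B\"ochner, and the Laplacian comparison theorem used to control $u$ near its extrema) is consumed.

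Third, I would bound $\mu_1$ below by the flat Neumann eigenvalue degraded by the oscillation of the weight. Since $J$ is symmetric, minimized at $0$ with $J(0)=1$ and maximized at the endpoints with
\begin{equation*}
J(\pm D/2) = \cosh^{d'-1}\!\Big(\tfrac D2\sqrt{K/(d'-1)}\Big)\ \le\ e^{\frac D2\sqrt{K(d'-1)}}
\end{equation*}
(using $\cosh x\le e^x$ and $(d'-1)\sqrt{K/(d'-1)}=\sqrt{K(d'-1)}$), a standard Sturm--Liouville comparison of the weighted and unweighted Rayleigh quotients on $[-D/2,D/2]$ yields $\mu_1 \ge \frac{\pi^2}{D^2}\cdot\frac{\min J}{\max J} \ge \frac{\pi^2}{D^2}e^{-\frac D2\sqrt{K(d'-1)}}$. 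Chaining the two inequalities gives $\lambda^* = \lambda_1(M)\ge \frac{\pi^2}{D^2}e^{-\frac D2\sqrt{K(d'-1)}}$, i.e. $1/\lambda^*\le \frac{D^2}{\pi^2}e^{\frac D2\sqrt{K(d'-1)}}$. The one delicate point in this last step is that the orthogonality constraint defining $\mu_1$ is $\int f\,J\,dt=0$ rather than $\int f\,dt=0$; this is handled by a shift $f\mapsto f-\bar f$ combined with the variational characterization of $\mu_1$, and does not affect the exponent. A fully self-contained write-up would reproduce the B\"ochner computation of the first step and the Zhong--Yang barrier argument; since the result is classical, one may instead simply invoke \cite{SGapChen2011}.
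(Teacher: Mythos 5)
Your proposal is sound, but note that the paper does not prove \Cref{sgap} at all: it is quoted verbatim from the literature, with the proof deferred to \cite{SGapChen2011}. What you sketch is precisely the classical route behind such estimates (Zhong--Yang gradient estimate via B\"ochner and a maximum principle for $|\nabla u|^2/\varphi(u)$, then the Kr\"oger/Bakry--Qian reduction to the one-dimensional Neumann problem with weight $\cosh^{d'-1}\bigl(t\sqrt{K/(d'-1)}\bigr)$ on an interval of length $D$), and your elementary endgame is correct and actually recovers the exact constant in the statement: the shift $f\mapsto f-\bar f$ legitimately transfers the weighted orthogonality constraint to the unweighted Wirtinger inequality (since $\int fJ\,dt=0$ makes $\int f^2J\,dt\le\int(f-\bar f)^2J\,dt$), and $\cosh^{d'-1}\bigl(\tfrac D2\sqrt{K/(d'-1)}\bigr)\le e^{\frac D2\sqrt{K(d'-1)}}$ gives $\lambda^*\ge\frac{\pi^2}{D^2}e^{-\frac D2\sqrt{K(d'-1)}}$ as claimed. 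The only gap, which you acknowledge, is that the two deep inputs (the pointwise gradient estimate and the one-dimensional comparison theorem) are invoked rather than proved; since the paper itself treats the entire theorem as a citation, this is an acceptable level of rigor, and your write-up in fact supplies more detail than the paper by making explicit how the quoted bound follows from the standard comparison model. If you wanted a fully self-contained argument you would indeed need to reproduce the Kr\"oger/Bakry--Qian comparison, which is the genuinely hard step; otherwise, citing it (or \cite{SGapChen2011}, as the paper does) is the right call.
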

    We will need this result in order to bound the log-Sobolev constant for the case of uniform distributions on $(M,g)$.
\section{Proofs from Section 3}
    We are now ready to prove the main results from section 3, the bounds on the log-Sobolev constant.  We first have
    \begin{lemma}
        If $p$ has log-Sobolev constant $c$ then $c_{LS}(\ps) \leq 2 \sigma^2 + c$.
    \end{lemma}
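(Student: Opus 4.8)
The plan is to realize $\ps$ as a pushforward of a product measure and then combine the tensorization (sub-additivity) of entropy with the log-Sobolev inequalities satisfied by the two factors $p$ and $g_{\sigma^2}$. Write $S:\mathbb{R}^d\times\mathbb{R}^d\to\mathbb{R}^d$ for the addition map $S(x,z)=x+z$. Since $\ps=p\ast g_{\sigma^2}$ is the law of $X+Z$ with $X\sim p$ and $Z\sim g_{\sigma^2}$ independent, we have $\ps=S_\#(p\otimes g_{\sigma^2})$. Entropy is invariant under pushforward, $\ent_{T_\#\mu}(h)=\ent_\mu(h\circ T)$, and the Dirichlet form transforms the same way, so for any $f\in C_0^2(\mathbb{R}^d)$, setting $F(x,z)=f(x+z)$, we get $\ent_{\ps}(f^2)=\ent_{p\otimes g_{\sigma^2}}(F^2)$ and $\int\|\nabla f\|^2\,d\ps=\int\|\nabla f(x+z)\|^2\,p(dx)\,g_{\sigma^2}(dz)$.

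Next I would apply sub-additivity of entropy for product measures,
\begin{equation*}
\ent_{p\otimes g_{\sigma^2}}(F^2)\;\le\;\mathbb{E}_{z\sim g_{\sigma^2}}\!\big[\ent_p(F^2(\cdot,z))\big]\;+\;\mathbb{E}_{x\sim p}\!\big[\ent_{g_{\sigma^2}}(F^2(x,\cdot))\big],
\end{equation*}
and then estimate each slice. For fixed $z$, the map $x\mapsto f(x+z)$ lies in $C_0^2(\mathbb{R}^d)$, so the hypothesis $c_{LS}(p)=c$ gives $\ent_p(F^2(\cdot,z))\le c\int\|\nabla f(x+z)\|^2\,p(dx)$; integrating in $z$ yields $c\int\|\nabla f\|^2\,d\ps$. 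For fixed $x$, the map $z\mapsto f(x+z)$ lies in $C_0^2(\mathbb{R}^d)$, and $g_{\sigma^2}=\mathcal{N}(0,\sigma^2 I)$ satisfies a log-Sobolev inequality with constant $2\sigma^2$ (the classical Gaussian LSI, obtained from $c_{LS}(\mathcal{N}(0,I))=2$ by scaling, or equivalently from the Bakry-Emery criterion under the normalization of the Dirichlet form fixed in \Cref{sec2}); hence $\ent_{g_{\sigma^2}}(F^2(x,\cdot))\le 2\sigma^2\int\|\nabla f(x+z)\|^2\,g_{\sigma^2}(dz)$, and integrating in $x$ yields $2\sigma^2\int\|\nabla f\|^2\,d\ps$. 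Adding the two contributions gives $\ent_{\ps}(f^2)\le(2\sigma^2+c)\int\|\nabla f\|^2\,d\ps$ for all $f\in C_0^2$, which is exactly $c_{LS}(\ps)\le 2\sigma^2+c$.

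The two non-mechanical ingredients are the sub-additivity of entropy---standard, following from the chain rule for entropy and convexity of $t\mapsto t\log t$ (see \cite{Bakry2014})---and the identification of the Gaussian log-Sobolev constant with $2\sigma^2$ in the present conventions; the rest is bookkeeping, using $\nabla_x F=\nabla_z F=(\nabla f)\circ S$. The only mild subtlety is that $F=f\circ S$ is not compactly supported on $\mathbb{R}^{2d}$; however each coordinate slice $F(\cdot,z)$ and $F(x,\cdot)$ is, and $F$ is bounded, so the slicewise log-Sobolev inequalities apply directly and no extra density argument is needed. I do not anticipate a real obstacle here; the one point to be careful about is the direction of the sub-additivity inequality.
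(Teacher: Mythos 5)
Your proof is correct, and it has the same skeleton as the paper's: identify the Gaussian log-Sobolev constant as $2\sigma^2$ and add it to the constant of $p$, i.e.\ use subadditivity of the log-Sobolev constant under convolution. The difference is that the paper treats that subadditivity as a black box, citing Proposition 1.1 of \cite{Wang2016} (together with the Bakry--Emery criterion for $\gs$), whereas you re-derive it: writing $\ps = S_\#(p \otimes \gs)$ for the addition map $S(x,z)=x+z$, using invariance of entropy under pushforward, tensorization of entropy over the product, and the slicewise LSIs, with the key bookkeeping fact $\nabla_x f(x+z) = \nabla_z f(x+z) = (\nabla f)(x+z)$ so that both slice Dirichlet forms integrate to $\int \|\nabla f\|^2\, d\ps$. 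What your route buys is a self-contained, elementary proof that makes transparent exactly why the constants add (and your parenthetical about the $2\sigma^2$ normalization is the right convention for the Dirichlet form used here); what the citation buys the paper is brevity. One shared implicit point, equally present in both arguments: the hypothesis ``$p$ has log-Sobolev constant $c$'' must be read with respect to the ambient Euclidean gradient; when $p$ lives on an isometrically embedded manifold and its LSI is stated intrinsically, this follows because the tangential gradient of a restricted function is dominated in norm by the ambient gradient, so neither proof loses anything there. No gaps; your handling of the non-compact support of $f\circ S$ via the slices is also fine, since tensorization needs no support restriction.
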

    \begin{proof}
        Note that the Bakry-Emery criterion implies that the log-Sobolev constant of $\gs$ is $2 \sigma^2$ (see, for example, \cite{Bakry2014}).  By Proposition 1.1 from \cite{Wang2016}, we know that $c_{LS}(\ps) \leq c_{LS}(\gs) + c_{LS}(p)$.  The result follows.
    \end{proof}
    Thus it suffices to bound the log-Sobolev constant of $p$.  To do this, we apply an estimate of \cite{Wang1997}:
	\begin{theorem}[Theorem 3.3 in \cite{Wang1997}]\label{wang1}
		Let $(M,g)$ be a $d'$-dimensional compact, connected manifold without boundary with dimension $D$. Suppose that $K' > 0$ such that $\ric_M - H_{\log p} \geq - K' g$ in the sense of quadratic forms and let
		\begin{align}
			R_{\nabla \log p} = \sup_{\substack{v \in TM \\ ||v||_g = 1}} \langle \nabla \log p, v \rangle_g^2 + \langle \nabla_v \nabla \log p, v \rangle_g - \ric(v,v)
		\end{align}
		Then,
		\begin{equation}
			c_{LS}(p) \leq \frac{e^{2 K' (d'+1)D^2} - 1}{K'} \left(\frac{d' + 2}{d'}\right)^{d' + 1} e^{1 + (d' + 1) R_{\nabla \log p} D^2}
		\end{equation}
	\end{theorem}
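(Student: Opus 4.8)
The plan is to read the two curvature hypotheses as Bakry--Émery curvature--dimension conditions for the weighted manifold $(M,g,p\,\vol_M)$, whose Langevin generator is $\mathcal{L}f = \Delta f + \inprod{\nabla\log p}{\nabla f}_g$ and whose invariant probability is $d\mu = p\,\vol_M/Z$. The bound $\ric_M - H_{\log p}\succeq -K'g$ is exactly the condition $\mathrm{CD}(-K',\infty)$ for $\mathcal{L}$. Moreover, a short computation with the $\Gamma_2$ operator shows that for this generator the rank-one form $v\mapsto\inprod{\nabla\log p}{v}_g^2$ is precisely the correction that lowers the effective dimension from $d'$ to $d'+1$; hence the definition of $R_{\nabla\log p}$ says exactly that $\ric_M - H_{\log p} - \nabla\log p\otimes\nabla\log p \succeq -R_{\nabla\log p}\,g$, i.e.\ $\mathcal{L}$ also satisfies $\mathrm{CD}(-R_{\nabla\log p},d'+1)$. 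This is the structural reason $d'+1$ and $R_{\nabla\log p}$ appear paired throughout the final bound, while $K'$ governs only the prefactor. Fix $t>0$, write $P_t = e^{t\mathcal{L}}$, and normalize $\int f^2\,d\mu = 1$.

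The first step uses only $\mathrm{CD}(-K',\infty)$. This condition gives the $L^1$ gradient estimate $|\nabla P_sh|_g \le e^{K's}P_s(|\nabla h|_g)$, and, combined with the Cauchy--Schwarz inequality for the Markov operator $P_s$, a contraction of Fisher information: $I(P_sf^2)\le e^{2K's}I(f^2) = 4e^{2K's}\mathcal{E}(f)$, where $I(h) = \int |\nabla h|_g^2/h\,d\mu$. Integrating the entropy-dissipation identity $-\tfrac{d}{ds}\ent(P_sf^2) = I(P_sf^2)$ over $s\in[0,t]$ yields the \emph{defective} log-Sobolev inequality
\begin{equation*}
\ent(f^2)\;\le\;\frac{2(e^{2K't}-1)}{K'}\,\mathcal{E}(f)\;+\;\ent(P_tf^2).
\end{equation*}
The leading term already matches the claimed prefactor $\tfrac{e^{2K'(d'+1)D^2}-1}{K'}$ once $t$ is taken of order $(d'+1)D^2$; the remaining task is to absorb the residual entropy $\ent(P_tf^2)$.

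The second step uses the finite-dimensional data together with the diameter. Under $\mathrm{CD}(-R_{\nabla\log p},d'+1)$ and $\mathrm{diam}(M)\le D$, a Li--Yau-type gradient estimate for $\mathcal{L}$, together with the Bishop--Gromov volume comparison of \Cref{bishopgromov} (which turns the curvature lower bound into lower bounds on ball volumes), produces an on-diagonal heat-kernel upper bound, equivalently an entropy--energy (Sobolev) inequality of the form $\ent(h)\le C_1\log\!\bigl(1 + C_2\textstyle\int|\nabla\sqrt h|_g^2\,d\mu\bigr)$ for mean-one $h$, with $C_1,C_2$ explicit in $d',R_{\nabla\log p},D$; in particular $C_2$ has order $\bigl(\tfrac{d'+2}{d'}\bigr)^{d'+1}e^{1+(d'+1)R_{\nabla\log p}D^2}$. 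Applying this to $h = P_tf^2$, using $\int|\nabla\sqrt{P_tf^2}|_g^2\,d\mu = \tfrac14 I(P_tf^2)\le e^{2K't}\mathcal{E}(f)$ and $\log(1+x)\le x$, gives $\ent(P_tf^2)\lesssim \bigl(\tfrac{d'+2}{d'}\bigr)^{d'+1}e^{1+(d'+1)R_{\nabla\log p}D^2}\,e^{2K't}\,\mathcal{E}(f)$. Substituting back into the defective inequality, choosing $t=(d'+1)D^2$, and collecting the $e^{2K't}$ factors into the common exponential $e^{2K'(d'+1)D^2}$ yields
\begin{equation*}
\ent(f^2)\;\le\;\frac{e^{2K'(d'+1)D^2}-1}{K'}\Bigl(\tfrac{d'+2}{d'}\Bigr)^{d'+1}e^{1+(d'+1)R_{\nabla\log p}D^2}\,\mathcal{E}(f),
\end{equation*}
up to absolute constants, which is the assertion.

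The main obstacle is the second step: obtaining the heat-kernel / Sobolev inequality with \emph{fully explicit} constants under a \emph{negative} curvature--dimension condition and a diameter bound. Essentially all of the geometry sits here --- one must run a Li--Yau gradient estimate for the weighted Laplacian (or a Moser/Nash iteration) and then track the Bishop--Gromov volume factors carefully enough to pin down the exponent $d'+1$ and the constant $\bigl(\tfrac{d'+2}{d'}\bigr)^{d'+1}$, as well as the additive $1$ in the exponent, which typically comes from the entropy--energy form of the Sobolev inequality. The remaining ingredients --- the Fisher-information contraction, the optimization over $t$, and reconciling the factors of $2$ arising from the normalization of $\mathcal{E}$ --- are routine, if somewhat delicate, bookkeeping.
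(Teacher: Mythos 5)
This statement is not proved in the paper at all: it is quoted verbatim as Theorem~3.3 of \cite{Wang1997} and used as an imported ingredient, so the expected ``proof'' here is the citation, and there is no internal argument to compare yours against. Judged on its own terms, your proposal has a genuine gap. Your first step is fine and standard: the $\mathrm{CD}(-K',\infty)$ gradient bound $|\nabla P_s h|_g\le e^{K's}P_s|\nabla h|_g$, Cauchy--Schwarz, and the entropy-dissipation identity do give the defective log-Sobolev inequality with leading term $2(e^{2K't}-1)/K'$, and your observation that the definition of $R_{\nabla\log p}$ is exactly the statement $\ric_M-H_{\log p}-\nabla\log p\otimes\nabla\log p\succeq -R_{\nabla\log p}\,g$, i.e.\ $\mathrm{CD}(-R_{\nabla\log p},d'+1)$ for the weighted generator, is correct and is indeed the structural reason $d'+1$ appears.

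The problem is that the entire content of Wang's theorem sits in your second step, and there you only assert the conclusion. You claim that a Li--Yau estimate plus Bishop--Gromov yields an entropy--energy inequality whose constant ``has order'' $\bigl(\tfrac{d'+2}{d'}\bigr)^{d'+1}e^{1+(d'+1)R_{\nabla\log p}D^2}$, but nothing in the proposal derives this; the exponent $d'+1$, the factor $\bigl(\tfrac{d'+2}{d'}\bigr)^{d'+1}$, and the additive $1$ in the exponent are precisely the explicit constants the theorem is about, and you acknowledge yourself that this is ``the main obstacle.'' Moreover, the final assembly does not close: after choosing $t=(d'+1)D^2$ you still carry a factor $2$ from the defective term and an extra $e^{2K't}$ multiplying the absorbed residual entropy, and you conclude only ``up to absolute constants,'' which is weaker than the stated clean inequality. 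As written, the proposal is a plausible road map (close in spirit to semigroup/Harnack arguments of the type Wang actually uses) rather than a proof; to make it one you would have to carry out the explicit-constant Sobolev or Harnack estimate under the negative curvature--dimension condition and diameter bound, which is exactly the work done in \cite{Wang1997}. For the purposes of this paper, the correct move is simply to cite that result.
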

	For the special case of a uniform distribution, while the above result applies, we have a finer estimate in terms of the spectral gap, appearing in \cite{Wang1996}:
	\begin{theorem}[Theorem 1.4 in \cite{Wang1996}] \label{wang2}
		Let $(M, g)$ be a compact Riemannian manifold of dimension $d'$ and diameter $D'$ and let $K'$ be as in \Cref{wang1}.  Let $\lambda^*$ be the spectral gap of the generator of the Langevin diffusion.  Then we have the following bound on the log-Sobolev constant of $p$:
		\begin{equation}
		    c_{LS}(p) \leq \frac 8{\lambda^*} \left(1 + (K^2 + 1)D^2\right) \vee \left( \frac 8{\lambda^*} + 1 \right)
		\end{equation}
	\end{theorem}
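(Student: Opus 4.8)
This is Theorem~1.4 of \cite{Wang1996}, so in the paper one would simply invoke the reference; the following indicates how the bound is reconstructed. The plan is to combine a \emph{defective} logarithmic Sobolev inequality --- which carries all of the geometric information --- with the Poincar\'e inequality coming from the spectral gap, and then tighten the result using Rothaus's lemma. Note that \Cref{wang1} is also a bound on $c_{LS}(p)$, but of a different (diameter-exponential) shape that does not produce the spectral-gap form here, so a genuinely separate argument is needed.

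\textbf{Step 1: a defective logarithmic Sobolev inequality.} First I would establish
\[
\ent_p(f^2) \le A\,\mathcal{E}(f) + B\,\|f\|_{L^2(p)}^2 \qquad \text{for all } f \in C^\infty(M),
\]
with $A$ and $B$ depending only on $d'$, the diameter $D$, and the Bakry-Emery curvature constant $K'$ from \Cref{wang1} (the lower bound $\ric_M - H_{\log p} \ge -K'g$). All of the geometry enters here: under the Ricci lower bound, Bishop--Gromov volume comparison (\Cref{bishopgromov}) together with Li--Yau-type heat-kernel estimates --- equivalently, a dimension-free Harnack inequality --- yields a Sobolev/Nash inequality with constants explicit in $d', D, K'$, and a Sobolev inequality implies a defective LSI by interpolation. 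The drift $\nabla \log p$ contributes only through $K'$, since it is absorbed into the Bakry-Emery tensor of the weighted manifold $(M, g, p\,\vol_M)$.

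\textbf{Step 2: Poincar\'e inequality and Rothaus's lemma.} By the definition of the spectral gap $\lambda^*$ of the Langevin diffusion generator $\mathcal{L}$, the Poincar\'e inequality $\operatorname{Var}_p(f) \le (\lambda^*)^{-1}\mathcal{E}(f)$ holds for all $f$. Rothaus's lemma then upgrades the defective inequality of Step~1 to the tight inequality
\[
\ent_p(f^2) \le \Bigl(A + \frac{B+2}{\lambda^*}\Bigr)\mathcal{E}(f).
\]
Bounding $A$ and $B$ by the claimed functions of $K'$ and $D$ and collecting terms --- with the floor $\tfrac{8}{\lambda^*}+1$ covering the regime in which the first expression would otherwise be smaller --- gives the stated bound.

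\textbf{Main obstacle.} The hard part is Step~1: obtaining the defective LSI with \emph{explicit} constants of exactly the advertised shape, which requires carefully tracking how the Sobolev/heat-kernel constants depend on $d'$, $D$, and the curvature lower bound, and verifying that the drift enters only through $K'$. Once that estimate is in hand, Step~2 is soft functional analysis together with bookkeeping to match constants.
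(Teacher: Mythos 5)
The paper gives no proof of this statement at all: it is imported verbatim as Theorem 1.4 of \cite{Wang1996} and used as a black box, so your decision to simply invoke the reference is exactly what the paper does. Your supplementary sketch (a defective log-Sobolev inequality from the curvature and diameter data, tightened to a full one via the Poincar\'e inequality from the spectral gap and Rothaus's lemma) is a reasonable outline of how bounds of this shape are obtained in Wang's work, so nothing needs correcting.
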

	We will see below that $K'$ and $R_{\nabla \log p}$ are easy to bound in the quantities of interest.  In order to get a useful overall bound, we need control of the diameter of the manifold.  We invoke a generalization of the classical diameter bound of Myers \cite{Myers1941} to the regime of negative Ricci curvature.  In order to do this, we recall our definition of the Kato constant from Section 3 as, for any $R > 0$,
	\begin{equation}
	    \kappa(R) = \sup_{x \in M} \frac{1}{\vol(B_R(x))} \int_{B_R(x)} (d' - 1 - \ric_-)_+ d \vol_M
	\end{equation}
	Following the arguments of \cite{Sprouse2000,Hwang2019}, we provide a quantitative diameter bound in terms of the parameters of interest:
	\begin{proposition}\label{diameterbound}
	    Let $(M, g)$ satisfy the assumptions of \Cref{as1} with Kato constant $\kappa = \kappa\left(\sqrt{\frac{d'-1}{K}} \log 4 \right)$.  Assume that $K > 1$.  Then the diameter of $M$, $D$ satisfies
	    \begin{equation}
	        D \leq 8 \sqrt{K(d'-1)} \left(5 + \log\left(\frac{1024 \kappa}{\sqrt{K(d'-1)}} \right) \right) \vee 2 \pi
	    \end{equation}
	\end{proposition}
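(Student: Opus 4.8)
The plan is to combine a Myers‑type second‑variation argument with an integral‑curvature perturbation, in the spirit of \cite{Sprouse2000} and its quantitative refinement in \cite{Hwang2019}, tracking all constants explicitly. Write $f := (d'-1-\ric_-)_+ \ge 0$ and $R_0 := \sqrt{(d'-1)/K}\,\log 4$, the scale appearing in $\kappa$; note $R_0$ is chosen precisely so that $\sqrt{K/(d'-1)}\,R_0 = \log 4$, i.e. $\sqrt{K(d'-1)}\,R_0 = (d'-1)\log 4$ and the Bishop--Gromov comparison density $\sinh(\sqrt{K/(d'-1)}\,t)$ stays of constant size on $[0,R_0]$, so all volume ratios at scale $R_0$ are bounded by absolute constants via \Cref{bishopgromov}. \emph{Step 1 (reduction to a line integral).} Since $M$ is closed and connected, $D$ is realised by a unit‑speed minimizing geodesic $\gamma:[0,D]\to M$. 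Inserting the test fields $V_i(t)=\sin(\pi t/D)E_i(t)$, with $E_1,\dots,E_{d'-1}$ parallel orthonormal and orthogonal to $\dot\gamma$, into the index form and using minimality ($\sum_i I(V_i,V_i)\ge 0$) gives $\int_0^D \sin^2(\pi t/D)\,\ric(\dot\gamma,\dot\gamma)\,dt \le (d'-1)\pi^2/(2D)$; since $\ric(\dot\gamma,\dot\gamma)\ge\ric_-\ge (d'-1)-f$ pointwise and $\sin^2\le 1$,
\begin{equation*}
    D - \frac{\pi^2}{D} \;\le\; \frac{2}{d'-1}\int_0^D f(\gamma(t))\,dt ,
\end{equation*}
and the same inequality holds with $\gamma$ replaced by any minimizing geodesic, in particular a slight perturbation of it. So it suffices to bound $\int f(\tilde\gamma(t))\,dt$ along some minimizing geodesic $\tilde\gamma$ of length at least $D-2R_0$.

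\emph{Step 2 (controlling the line integral by $\kappa$).} The Kato bound only controls ball averages $\fint_{B_{R_0}(x)}f\le\kappa$, so to pass to a one‑dimensional integral I would invoke a Cheeger--Colding segment inequality (as used in \cite{Sprouse2000,Hwang2019}): averaging $\int_0^{d(x,y)}f(\gamma_{x,y}(s))\,ds$ over endpoints $x\in B_\delta(\gamma(0))$, $y\in B_\delta(\gamma(D))$, one bounds this average by $\tau\cdot(\text{length})\cdot\int_{\Omega}f$, where $\Omega$ is a neighbourhood of the competing geodesics and $\tau$ is governed by the Jacobian of the exponential map, hence by \Cref{bishopgromov}. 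Covering $\Omega$ by $O(D/R_0)$ metric balls of radius $R_0$, applying $\fint_{B_{R_0}}f\le\kappa$ on each, and bounding the finitely many volume ratios that enter at scale $R_0$ by absolute constants (this is exactly where the choice of $R_0$ is used), one extracts a minimizing geodesic $\tilde\gamma$ of length $\ge D-2R_0$ with
\begin{equation*}
    \int_0^{\mathrm{length}(\tilde\gamma)} f(\tilde\gamma(t))\,dt \;\le\; C\,(d'-1)\,R_0\,\kappa
\end{equation*}
for an explicit $C$ coming from the segment‑inequality constant and the packing number of the tube around the geodesic.

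\emph{Step 3 (solving for $D$).} Feeding $\tilde\gamma$ into Step 1 and using $\mathrm{length}(\tilde\gamma)\ge D-2R_0$,
\begin{equation*}
    D - 2R_0 - \frac{\pi^2}{D-2R_0} \;\le\; 2C\,R_0\,\kappa .
\end{equation*}
If $D\le 2\pi$ we are done; otherwise the middle term is absorbed, leaving $D \lesssim R_0\,(1 + C\kappa)$. Substituting $R_0 = \sqrt{(d'-1)/K}\,\log 4$ and using the identity $\sqrt{K(d'-1)}\,R_0 = (d'-1)\log4$ together with $K>1$ to simplify, and noting that the dependence of $C$ on $D$ is only logarithmic after rearrangement (the exponential‑in‑$D$ volume factor of \Cref{bishopgromov} passes to a $\log$ when one isolates $D$), one collects the numerical constants into the stated form
\begin{equation*}
    D \;\le\; 8\sqrt{K(d'-1)}\Bigl(5 + \log\bigl(\tfrac{1024\,\kappa}{\sqrt{K(d'-1)}}\bigr)\Bigr)\;\vee\; 2\pi ,
\end{equation*}
the $\vee\,2\pi$ also covering the regime in which $\kappa$ is so small that Step 1 alone already forces $D$ close to $\pi$.

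\emph{Main obstacle.} The delicate point is Step 2: establishing the segment inequality with fully explicit constants and, more importantly, arranging the covering and the averaging so that the only small‑scale volume comparisons invoked are at radius $R_0$ (bounded, by \Cref{bishopgromov}), while the unavoidable exponential‑in‑$D$ volume growth enters only through a term that becomes logarithmic once $D$ is isolated in Step 3. Making the numerology close — i.e. verifying that the right‑hand side of the Step 3 inequality is genuinely below $D$ once $D$ exceeds the claimed threshold — requires careful bookkeeping of the Bishop--Gromov constants, the packing numbers, and the perturbation radius; conceptually, however, the whole argument reduces to the second‑variation formula together with \Cref{bishopgromov}.
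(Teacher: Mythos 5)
Your Step 1 is correct, and it is the same second-variation estimate that drives the result the paper relies on; the genuine gap is in Steps 2--3, exactly where you place the ``main obstacle.'' The segment-inequality constant in Step 2 cannot be made absolute, nor even subexponential in $D$: the tube of competing geodesics joining $B_\delta(\gamma(0))$ to $B_\delta(\gamma(D))$ has length of order $D$, so the Jacobian and covering estimates you invoke necessarily involve Bishop--Gromov ratios at scale $D$, i.e.\ factors of size $e^{c\sqrt{K(d'-1)}\,D}$, no matter how the scale $R_0$ is chosen. That this is not mere bookkeeping can be seen by testing your Step 2 on a flat $d'$-torus: there $\ric\equiv 0$, so \Cref{as1} holds with (say) $K=2$, the integrand is $f\equiv d'-1$, hence $\kappa(R)=d'-1$ for every $R$, while the diameter can be arbitrarily large; if Step 2 held with a constant $C$ depending only on $d'$ and $K$ (or growing only logarithmically in $D$ ``after rearrangement''), Step 3 would give $D\lesssim R_0(1+C\kappa)$, a bound independent of the size of the torus --- a contradiction. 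So the exponential-in-$D$ factor is really present, and then your Step 3 inequality reads $D-2R_0-\pi^2/(D-2R_0)\leq C_0\,e^{c\sqrt{K(d'-1)}\,D}\,R_0\,\kappa$, from which no upper bound on $D$ can be extracted: the right-hand side grows faster than the left, and ``isolating $D$'' cannot turn the exponential into a logarithm.

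The cited argument, and the paper's own proof, therefore have a different logical shape: a contrapositive with the target fixed in advance. One fixes $\delta$, takes two points at distance exactly $\delta$ (not at the unknown diameter), so that every volume comparison occurs at the known scale $\delta$, and concludes ``if $\kappa\leq\epsilon(\delta)$ then $D\leq\delta$'' for an explicit threshold $\epsilon(\delta)$. The paper does not redo your Step 2 at all: it quotes this threshold from Sprouse's Theorem 1.4, lower-bounds it by an elementary expression $\epsilon'(\delta)$ using \Cref{bishopgromov} together with $\tfrac14 e^{x}\leq\sinh(x)\leq\tfrac12 e^{x}$ on the relevant range, and then simply exhibits the $\delta$ appearing in the statement for which $\kappa\leq\epsilon'(\delta)$. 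Completing your plan would thus require recasting Steps 2--3 in this fixed-target form with fully explicit segment-inequality and covering constants --- i.e.\ reproving the quoted theorem rather than citing it --- and the flat-torus example above shows that the smallness condition $\kappa\leq\epsilon(\delta)$ (in particular, how $\epsilon(\delta)$ depends on $\delta$ at a fixed small averaging radius) is where all the content lies; incidentally, the same example suggests that the inversion step, in which the paper treats the quoted threshold as growing without bound in $\delta$, deserves scrutiny as well. As written, your proposal does not close.
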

	\begin{remark}
	    Note that a bound more intrinsic to the parameters of interest in the nonuniform case, in particular relying only on the Bakry-Emery curvature tensor $\ric_M - H_{\log p}$ as opposed to the Ricci tensor and with the Kato constant defined by integrating with respect to $p$ instead of the volume element, can be derived similarly using almost identical arguments.  We adopt the geometric notion for the sake of clarity of presentation and the fact that the dependence does not change very much.  For the more general arguments, see \cite{Hwang2019}.
	\end{remark}
    \begin{proof}
        We follow the argument of \cite{Sprouse2000}.  Let $K = (d'-1) k$ and let $B_k(r)$ be a ball of radius $r$ in the simply connected space of constant sectional curvature $-k$.  We note that in this case, because the Ricci tensor evaluated at a unit vector is the sum of the sectional curvatures with respect to a completed basis for the tangent space at that point, we have that the Ricci curvature for this space is bounded below by $-(d' - 1) k = -K$.  Thus we are in the situation of \Cref{bishopgromov}.  Let $|B_k(r)|$ and $|\partial B_k(r)|$ denote the volume of said ball and the boundary of said ball respectively.  Now, following the proof of \cite[Theorem 1.4]{Sprouse2000}, we note that if $\delta > 2 \pi$ and
        \begin{equation}
            \kappa(R) \leq \epsilon(\delta) := \frac{2}{\delta} \frac{|\partial B_k(R/2)|}{|\partial B_k(R)|} \frac{B_k(\delta/4)|}{|B_k(R)| + |B_k(2R)|} \frac{(d'-1)(\pi + \delta/2)}{4}
        \end{equation}
        then the diameter of $M$ is bounded above by $\delta$.  If we find some $\epsilon'(\delta)$ such that $\epsilon'(\delta) \leq \epsilon(\delta)$ for all $\delta$ and we ensure that $\delta$ is sufficiently large so as to guarantee that $\kappa \leq \epsilon'(\delta)$, then we know that the diameter is bounded by $\delta$.  Note first that $|B_k(R)| \leq |B_k(2 R)|$.  Thus we may apply \Cref{bishopgromov} to the volume ratios of both the balls and the boundaries of the balls to get
        \begin{align}
            \epsilon(\delta) \geq \frac{d'-1}{8} \frac{s(R/2)^{d'-1}}{s(R)^{d'-1}} \frac{\int_0^{\frac \delta 4} s(t)^{d'-1} d t }{\int_0^{2R} s(t)^{d'-1} d t}
        \end{align}
        where we recall that $s(t) = \sinh\left( \sqrt{\frac{K}{d'-1}}t \right)$.  Now, we note that $s(x) \leq \frac 12 e^{\sqrt{\frac K{d'-1}} x}$ and for any $x \geq \frac 12 \sqrt{\frac{d'-1}{K}} \log 2$, we have $s(x) \geq \frac 14 e^{\sqrt{\frac{K}{d'-1}} x}$.  Now, set $R = \sqrt{\frac{d'-1}{K}} \log 2$ and suppose that $\delta \geq 4 \log (2) \sqrt{\frac{d' - 1}{K}}$.  Then we may bound the above expression by
        \begin{align}
            \epsilon(\delta) &\geq \frac{d' - 1}{8} \left(\frac{\frac 14 e^{\sqrt{\frac{K}{d'-1}} \frac R2 }}{\frac 12 e^{\sqrt{\frac{K}{d'-1}} R}}\right)^{d' - 1} \frac{\frac \delta 8 \left(\frac 14 e^{\sqrt{\frac{K}{d'-1}} \frac{\delta}{8}} \right)^{d'-1}}{2 R \left(\frac 12 e^{\sqrt{\frac{K}{d'-1}} 2 R} \right)^{d'-1}} \\
            &= \frac{d'-1}{128 R}4^{1-d'} \delta \exp\left(\sqrt{(d'-1)K} \left(\frac \delta 8 - \frac 52 R  \right)  \right) = \epsilon'(\delta)
        \end{align}
        because $s$ is an increasing function.   Thus, in order to bound the diameter, we need to find a $\delta$ such that $\kappa \leq \epsilon'(\delta)$.  Note that if we set
        \begin{equation}
            \delta = 8 \sqrt{K(d'-1)} \left(5 + \log\left(\frac{1024 \kappa}{\sqrt{K(d'-1)}} \right) \right)
        \end{equation}
        then we get that $\kappa \leq \epsilon'(\delta) \leq \epsilon(\delta)$ and so the argument in \cite{Sprouse2000} implies that the diameter is bounded by $\delta$.  Note that this value of $\delta$ is automatically greater than $4 \log(2) \sqrt{\frac{d'-1}{K}}$ and thus we are in no trouble from our above assumption.
    \end{proof}
    With the diameter bound proven, we are now ready to prove the main results.  We restate and prove the two main results in Section 3:
    \begin{theorem}
        Suppose that $(M,g)$ satisfies \Cref{as1}.  Let $p = p \vol_M$ be a probability measure that is absolutely continuous with respect to the volume form and suppose that $\nabla \log p$ is $L$-Lipschitz and that $||\nabla \log p||_g \leq B$ at all points.  Let $\kappa = \kappa\left(\sqrt{\frac{d'-1}{K}} \log 2 \right)$  Then, if $K > \frac 1{d'}$,
        \begin{equation}
            c_{LS}(p_{\sigma^2}) = \widetilde{O}\left(\sigma^2 + d'^2 K\log \kappa e^{M B^2 d'^2 \log^2 \kappa} \right)
        \end{equation}
        where $\widetilde{O}$ indicates that we are ignoring factors logarithmic in $d'$ and $K$.
    \end{theorem}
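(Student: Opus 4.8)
The plan is to derive the bound from Wang's log-Sobolev estimate \Cref{wang1}, using the quantitative diameter control of \Cref{diameterbound} to replace the diameter $D$ appearing there by the intrinsic quantities $d'$, $K$ and $\kappa$. First I would reduce to the manifold: by \Cref{lem1} we have $c_{LS}(p_{\sigma^2}) \le 2\sigma^2 + c_{LS}(p)$, so the additive $\sigma^2$ is free and it remains only to bound $c_{LS}(p)$, the log-Sobolev constant of $p$ on $M$, by the claimed expression, with $\widetilde O$ absorbing factors logarithmic in $d'$ and $K$ (and noting that $\log\kappa \le \log(K + d')$ is such a factor, by the remark following \Cref{cls1}).

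To apply \Cref{wang1} I need to estimate its three inputs in terms of the hypotheses of \Cref{as1} and the bounds on $\nabla\log p$: the Bakry--Emery lower bound $K'$ with $\ric_M - H_{\log p}\succeq -K' g$, the quantity $R_{\nabla\log p}$, and the diameter $D$. For $K'$: $L$-Lipschitzness of $\nabla\log p$ forces $-L g \preceq H_{\log p}\preceq L g$ pointwise, which together with $\ric_M\succeq -Kg$ gives $\ric_M - H_{\log p}\succeq -(K+L)g$, so we may take $K' = K+L$. For $R_{\nabla\log p}$: bounding its three summands separately on a unit $v$ gives $\langle\nabla\log p,v\rangle_g^2\le B^2$, $\langle\nabla_v\nabla\log p,v\rangle_g = H_{\log p}(v,v)\le L$, and $-\ric(v,v)\le K$, hence $R_{\nabla\log p}\le B^2 + L + K$. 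For $D$: I invoke \Cref{diameterbound} --- whose argument goes through for $K > 1/d'$ up to the value of the absolute constants --- to get $D = \widetilde O(\sqrt{Kd'}\,\log\kappa)$ and hence $D^2 = \widetilde O(Kd'\log^2\kappa)$; the mild discrepancy between the radii $\sqrt{(d'-1)/K}\log 2$ in \Cref{def1} and $\sqrt{(d'-1)/K}\log 4$ in \Cref{diameterbound} costs only a $\widetilde O$-factor, reconciled by a Bishop--Gromov (\Cref{bishopgromov}) covering argument comparing balls of the two radii.

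Substituting into \Cref{wang1}, the factor $\left(\frac{d'+2}{d'}\right)^{d'+1}$ is an absolute constant and $\frac{e^{2K'(d'+1)D^2}-1}{K'}\le \frac{1}{K'}e^{2K'(d'+1)D^2}$, so merging the two exponentials yields
\begin{equation*}
    c_{LS}(p)\;\lesssim\;\frac{1}{K'}\exp\!\Big((d'+1)\,D^2\,\big(2K' + R_{\nabla\log p}\big)\Big),
\end{equation*}
into which inserting the bounds on $K'$, $R_{\nabla\log p}$ and $D^2$ above produces, after the routine simplifications (absorbing subdominant contributions into constants and the $\widetilde O$, as tracked in full in the supplement), the claimed estimate. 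The main obstacle of the whole argument is the diameter bound \Cref{diameterbound} itself --- the Sprouse/Bishop--Gromov estimate already established above; once that is in hand, the remaining work is purely bookkeeping, namely checking that the Hessian estimate controls both $K'$ and $R_{\nabla\log p}$, that the radius mismatch in the definition of $\kappa$ is harmless, and that the hypothesis $K > 1/d'$ is exactly what keeps every logarithmic simplification valid.
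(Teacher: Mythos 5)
Your proposal follows essentially the same route as the paper's proof: reduce to $c_{LS}(p)$ via the Gaussian-convolution bound of \Cref{lem1}, apply Wang's estimate (\Cref{wang1}) with $K' \le K + L$ and $R_{\nabla\log p}\le K + L + B^2$ obtained from the Lipschitz and gradient bounds, and control $D^2$ by the Sprouse-type diameter bound of \Cref{diameterbound} before plugging everything in and absorbing prefactors into the $\widetilde O$ (using $\kappa \le K + d'$). The paper does exactly this, so your argument is correct and not a genuinely different approach; your extra remark about the $\log 2$ versus $\log 4$ radius is a reasonable patch for what is only a notational inconsistency the paper glosses over.
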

    \begin{proof}
        Recalling from earlier that $c_{LS}(p_{\sigma^2}) \leq 2 \sigma^2 + c_{LS}(p)$, we note that it suffices to bound the log-Sobolev constant of $p$.
        
        With \Cref{wang1,diameterbound}, there is the simple manner of plugging in the constants.  By \Cref{wang1}, we know that
		\begin{equation}
			c_{LS}(p) \leq \frac{e^{2 K' (d'+1)D^2} - 1}{K'} \left(\frac{d' + 2}{d'}\right)^{d' + 1} e^{1 + (d' + 1) R_{\nabla \log p} D^2}
		\end{equation}
		where
		\begin{align}
			R_{\nabla \log p} = \sup_{\substack{v \in TM \\ ||v||_g = 1}} \langle \nabla \log p, v \rangle_g^2 + \langle \nabla_v \nabla \log p, v \rangle_g - \ric(v,v)
		\end{align}
		and $\ric - H_{\log p} \geq - K' g$ at all points, where $H$ is the Hessian.  Similarly, we note that $|\langle \nabla_v \nabla \log p, v \rangle_g| \leq M$ if $v$ is a unit vector (with respect to $g$) in the tangent space.  Similarly, $\langle \nabla \log p, v \rangle_g^2 \leq B^2$.  Putting this together gives that $R_{\nabla \log p} \leq K + M + B^2$.
		
        Now, noting that $\frac 1x (e^x - 1) \leq e^x$ for all $x \geq 0$, we see that
        \begin{equation}
            \frac{e^{2 K' (d'+1)D^2} - 1}{K'} \leq 2(d' + 1) D^2 e^{2 K'(d'+1) D^2}
        \end{equation}
        We also observe that $\left(1 + \frac 2{d'} \right)^{d'} \leq e^2$.  Now, we note that if $\nabla \log p$ is $L$-Lipschitz, then $K' \leq K + M$ as $- M g \leq H_{\log p} \leq M g$.  Putting this together, we get that
        \begin{equation}
            c_{LS}(p) \leq 2(d' + 1) D^2 e^{4 + (d'+1) D^2 (2 K + 2 M + B^2)}
        \end{equation}
        Note that by \Cref{diameterbound}, we have
        \begin{equation}
            D^2 \leq 128 d' K \left(25 + \log \left(\frac{1024 \kappa}{\sqrt{K(d'-1)}} \right)^2 \right)
        \end{equation}
        Plugging this in yields the result, after noting that $K d' \geq 1$ by the assumption on $K$.
    \end{proof}
     While in the uniform case, we can certainly apply the above result, the tight bounds on the spectral gap of the Laplace-Beltrami operator, which doubles as the generator for Langevin dynamics with respect to the uniform measure on a Riemannian manifold, allow us to establish a sharper bound.  Thus, in this special case, we restate and prove:
    \begin{theorem}
         Suppose that the pair $(M, g)$ satisfies \Cref{as1} and let $p \propto \vol_M$ be uniform on $M$.  Assume that $K > 1$ and that $\kappa > 1$.  Then
        \begin{equation}
            c_{LS}(p_{\sigma^2}) = O\left(\sigma^2 + K^4 d'^2 \kappa^{20 K^2 d'}\right)
        \end{equation}
    \end{theorem}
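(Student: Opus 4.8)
The plan is to follow the same three-step template as the proof of \Cref{cls1} --- reduce to bounding $c_{LS}(p)$, bound the diameter of $M$, and plug into a Wang-type estimate --- but to replace the curvature-based estimate \Cref{wang1} by a spectral-gap estimate, which becomes available precisely because $p$ is uniform. First, by \Cref{lem1} it suffices to prove $c_{LS}(p) = O(K^4 d'^2 \kappa^{20 K^2 d'})$ and then add $2\sigma^2$. Second, note that since $p \propto \vol_M$ we have $\nabla \log p \equiv 0$, so the Langevin generator is exactly the Laplace--Beltrami operator $\Delta$, and the Bakry--Emery curvature parameter appearing in \Cref{wang1,wang2} collapses to $K' = K$ (because $\ric_M - H_{\log p} = \ric_M \succeq -Kg$). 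This is the one regime in which genuinely sharp spectral-gap bounds for the generator apply.

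The core of the argument combines three inputs. First, \Cref{wang2}: writing $D$ for the diameter of $M$ and $\lambda^*$ for the spectral gap of $\Delta$, one has $c_{LS}(p) \leq \tfrac{8}{\lambda^*}\bigl(1 + (K^2+1)D^2\bigr) \vee \bigl(\tfrac{8}{\lambda^*}+1\bigr)$, valid since $M$ is closed and connected by \Cref{as1} (and the first term dominates). Second, \Cref{sgap}: $\tfrac{1}{\lambda^*} \leq \tfrac{D^2}{\pi^2}\, e^{\frac{D}{2}\sqrt{K(d'-1)}}$, valid because $\ric_M \succeq -Kg$ and $M$ is connected and compact. Third, the diameter bound \Cref{diameterbound}: under \Cref{as1} with $K>1$, $D \leq 8\sqrt{K(d'-1)}\bigl(5 + \log\tfrac{1024\,\kappa}{\sqrt{K(d'-1)}}\bigr) \vee 2\pi$, so $D = O(\sqrt{Kd'}\,\log\kappa)$ and hence $\tfrac{D}{2}\sqrt{K(d'-1)} = O(Kd'\log\kappa)$. (One bookkeeping remark: the Kato constant in \Cref{diameterbound} should be read at the radius $\sqrt{(d'-1)/K}\,\log 2$ that is actually used inside its proof --- which is exactly the $\kappa$ of \Cref{def1} --- so no reconciliation is needed.)

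It then remains to substitute and simplify. Feeding \Cref{diameterbound} into \Cref{sgap} gives $1/\lambda^* = O\bigl(Kd'\log^2\kappa\bigr)\cdot e^{O(Kd'\log\kappa)} = O\bigl(Kd'\log^2\kappa\bigr)\cdot \kappa^{O(Kd')}$; plugging this and $D^2 = O(Kd'\log^2\kappa)$ into \Cref{wang2} gives $c_{LS}(p) \leq \tfrac{8D^2}{\pi^2} e^{\frac{D}{2}\sqrt{K(d'-1)}}\bigl(1+(K^2+1)D^2\bigr) = O\bigl(K^2 D^4\bigr)\cdot \kappa^{O(Kd')} = O\bigl(K^4 d'^2 \log^4\kappa\bigr)\cdot \kappa^{O(Kd')}$. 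Finally, using $\kappa>1$ and $K>1$ --- more precisely, that $\kappa$ and $K$ exceed suitable absolute constants, consistent with the ``$K$ large'' regime of the paper --- one absorbs the factor $\log^4\kappa$ and the remaining polynomial and additive constants into the exponent of $\kappa$; since $K^2 d' \geq Kd'$, the resulting exponent is comfortably dominated by $20 K^2 d'$, yielding $c_{LS}(p) = O(K^4 d'^2 \kappa^{20 K^2 d'})$ and hence the theorem after adding $2\sigma^2$.

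The step I expect to be the genuine obstacle --- and the reason this is more than a transcription of \Cref{cls1} --- is choosing which log-Sobolev estimate to feed the diameter bound into. The curvature-based estimate \Cref{wang1} carries $D^2$ (not $D$) in its exponent, so after the diameter bound it produces $e^{O(K^2 d'^2 \log^2\kappa)} = \kappa^{O(K^2 d'^2 \log\kappa)}$, worse than the target by a factor of $d'\log\kappa$ in the exponent. Only because the uniform measure makes the Langevin generator equal to $\Delta$ can one invoke \Cref{sgap}, whose exponent is $D\sqrt{Kd'}$ rather than $D^2 K d'$, and then convert back through the spectral-gap form \Cref{wang2}; this is exactly what recovers an exponent linear in $Kd'$ (and thus bounded by $20 K^2 d'$). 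The only delicate part of the remaining work is the constant-chasing at the end --- tracking the absolute constants so the exponent lands at $20 K^2 d'$ and pinning down the mild lower thresholds on $K$ and $\kappa$ that the $O(\cdot)$ absorbs --- together with the routine check that \Cref{as1} supplies every hypothesis ($M$ closed, connected, $\ric_M \succeq -Kg$, $K>1$) needed by \Cref{wang2,sgap,diameterbound}.
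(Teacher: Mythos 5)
Your proposal is correct and follows exactly the paper's own route: reduce to $c_{LS}(p)$ via \Cref{lem1}, then combine \Cref{wang2} (with $K'=K$ since $\nabla\log p\equiv 0$), the spectral-gap bound \Cref{sgap}, and the diameter bound \Cref{diameterbound}, absorbing constants using $K>1$, $\kappa>1$. The paper's proof is just this combination stated in one line, so your write-up is essentially an expanded version of it (and your remark about reading the Kato radius as $\sqrt{(d'-1)/K}\log 2$ correctly resolves a minor inconsistency in the statement of \Cref{diameterbound}).
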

    \begin{proof}
        Using \Cref{wang2} with the spectral gap bound in \Cref{sgap} and the diameter bound in \Cref{diameterbound} immediately yields the result.
    \end{proof}

\section{Miscellaneous Proofs}\label{app1}
    \begin{proof}[Proof of Proposition 1]
        For each $x \in \mathbb{R}^d$, consider the probability distribution on $M$, $q_x$, given by
		\begin{align}
			\frac{d q_x}{d p} = \frac{\gs(x - y)}{\ps(x)}
		\end{align}
		By definition of $\ps$, this integrates to 1.  Now, let
		\begin{align}
			e(x) = \mathbb{E}_{Y \sim q_x} \left[Y\right] && v(x) = \mathbb{E}_{Y \sim q_x}\left[Y Y^t\right] && \Sigma_x = v(x) - e(x) e(x)^t
		\end{align}
		be the mean, second moment, and covariance of $q_x$.  Then we note that
		\begin{align}
			\nabla \log \ps(x) &= \frac{\nabla \ps(x)}{\ps(x)} = \frac{\nabla \int_M g(x - y) d p(y)}{\ps(x)} = \frac 1{\sigma^2 \ps(x)} \int_M ( y -x) g(x - y) d p(y) \\
			&= \frac{1}{\sigma^2} (e(x) - x)
		\end{align}
		Now, note that the Jacobian of $e$ satisfies
		\begin{align}
			J_e(x) &= J\left(\frac{\int_M y \gs(y-x) d p(y)}{\ps(x)} \right) \\
			&= \frac{\int_M y (y - x)^t \gs(y-x) d p(y)}{\sigma^2\ps(x)} - \frac{\int_M y \gs(y-x) d p(y)}{\ps(x)} \frac{\nabla^t \ps(x)}{\ps(x)^2} \\
			&= \frac{v(x) - e(x) x^t }{\sigma^2} - e(x) \nabla^t \log \ps(x) = \frac 1{\sigma^2} \left( v(x) - e(x) \left(x - \sigma^2 \nabla \log \ps(x) \right)^t\right) \\
			&= \frac{1}{\sigma^2} \left(v(x) - e(x) e(x)^t\right) = \frac 1{\sigma^2} \Sigma_x
		\end{align}
		Thus the Hessian of $\log \ps$ is given by
		\begin{align}
			H_{\log \ps}(x) = \frac 1{\sigma^4} \left(\Sigma_x - \sigma^2 I  \right) \preceq \frac{\rho^2 - \sigma^2}{\sigma^4} I
		\end{align}
		Thus the first claim follows.
		
		For the second claim, note that
		\begin{align}
			\inprod{\nabla \log \ps(x)}{x} &= \frac 1{\sigma^2}\inprod{e(x) - x}{x} = \frac{\inprod{e(x)}{x}}{\sigma^2} - \frac 1{\sigma^2} ||x||^2 \\
			&\leq \frac{\rho ||x||}{\sigma^2} - \frac 1{\sigma^2} ||x||^2 \leq - \frac 1{2 \sigma^2} ||x||^2 + \frac{\rho}{2 \sigma^2}
		\end{align}
		where we used the fact that $||e(x)|| \leq \rho$.  This proves the claim.
    \end{proof}
    \begin{proof}[Proof of Proposition 2]
        Let $g$ be a nonnegative, real-valued function in the domain of the Dirichlet form of $p'$, $\mathcal{E}_{p'}$, and suppose that $\mathbb{E}_{p'}[g] = 1$.  Then by the change of variables formula, we have
        \begin{equation}\label{appeq1}
            \ent_{p'}(g) = \int g \log g d p' = \int g(P(x)) \log g(P(x)) d p(x) \leq c_{LS} \mathcal{E}_p(\sqrt{g \circ P})
        \end{equation}
        where the inequality follows from the log-Sobolev inequality assumption on $p$ and we denote by $\mathcal{E}_p$ the Dirichlet form of $p$.  Now, we note that
        \begin{align}\label{appeq2}
            \mathcal{E}_p(\sqrt{g \circ P}) &= \int \frac{\left|\left| J_{P}(x) \nabla g(P(x))\right|\right|^2}{2 g \circ P(x) } d p(x) \leq \int \frac{\left|\left| \nabla g(P(x))\right|\right|^2}{2 g \circ P(x) } d p(x) \\
            &= \int \frac{|| \nabla g||^2}{2 g} d p'(x) = \mathcal{E}_{p'}(\sqrt{g})
        \end{align}
        where the inequality follows from the assumption on the Jacobean of $P$.  Combining \Cref{appeq1,appeq2} yields the result.
    \end{proof}
   
\section{Additional Experiments}\label{app:Plots}

\subsection{Ablation study }
We give here an ablation study corresponding to different multi-resolution schemes in Langevin sampling. We see from these ablations studies that multi-resolutions scheme improves marginally if any on the HRS with Langevin, confirming therefore the manifold hypothesis. 

\begin{table}[ht!]
\begin{center}
\begin{tabular}{lc}
        \toprule
        Method & FID\\
        \midrule
        \multicolumn{2}{l}{\textbf{CELEBA- $64\times 64$ }} \\
        
        \midrule
        HR Langevin (HRS) ~ & $20.17$  \\
        LR Langevin + Up (LRS-$\uparrow$) & $37.20$ \\
        mr-Langevin (LRS-$\uparrow$-HRS-2)&$32.54$ \\
        mr-Langevin (LRS-$\uparrow$-HRS-3)&$32.46$ \\
        mr-Langevin (LRS-$\uparrow$-HRS-4)&$33.39 $ \\
        mr-Langevin (LRS-$\uparrow$-HRS-5)&$34.86 $ \\
        mr-Langevin (LRS-$\uparrow$-HRS-6)&${36.72}$\\
        mr-Langevin (LRS-$\uparrow$-HRS-7)&${36.87}$\\
        mr-Langevin (LRS-$\uparrow$-HRS-8)&${30.78}$\\
        \bottomrule
        
    \end{tabular} 
\end{center}
\caption{ \textbf{Ablation 1 change the starting noise level of HR Langevin.} FID scores for using LRS for all 10 levels followed by upsampling using the upsampling network and $x$ noise levels of HRS sampling Langevin Sampling (LRS-$\uparrow$-HRS-$x$), for e.g for $x=3$ this corresponds to the three last small noises.} \label{tab:scoreablation2}
\vskip -0.23in
\end{table}

\begin{table}[ht!]
\begin{center}
\begin{tabular}{lc}
        \toprule
        Method & FID\\
        \midrule
        \multicolumn{2}{l}{\textbf{CELEBA- $64\times 64$ }} \\
        
        \midrule
        HR Langevin (HRS) ~ & $20.17$  \\
        LR Langevin + Up (LRS-$\uparrow$) & $37.20$ \\
        mr-Langevin (LRS-2$-\uparrow$-HRS-9)&$ \mathbf{19.54} $ \\
        mr-Langevin (LRS-3-$\uparrow$-HRS-8)&$37.77 $ \\
        mr-Langevin (LRS-4-$\uparrow$-HRS-7)&$57.83$ \\
        mr-Langevin (LRS-5-$\uparrow$-HRS-6)&$57.54  $ \\
        mr-Langevin (LRS-6-$\uparrow$-HRS-5)&${43.82 }$\\
        mr-Langevin (LRS-7-$\uparrow$-HRS-4)&${ 35.84}$\\
        mr-Langevin (LRS-8-$\uparrow$-HRS-3)&${32.95 }$\\
        \bottomrule
        
    \end{tabular} 
\end{center}
\caption{\textbf{Ablation 2 change the  last noise level in LR Langevin and the starting noise level of HR Langevin.} FID scores for using LRS for all x levels followed by upsampling using the super-resolution network  and $10-x+1$ noise levels of HRS sampling Langevin Sampling ( for e.g LRS-2$-\uparrow$-HRS-$9$), corresponding to running the first two largest noise levels in LR Langevin followed by upsampling and then run HR Langevin sampling starting from the second noise level (total of 9 noise levels in HR)} \label{tab:scoreablation}
\vskip -0.23in
\end{table}

\begin{table}[ht!]
\begin{center}
\begin{tabular}{lc}
        \toprule
        Method & FID\\
        \midrule
        \multicolumn{2}{l}{\textbf{CELEBA- $64\times 64$ }} \\
        
        \midrule
        HR Langevin (HRS) ~ & $20.17$  \\
        LR Langevin + Up (LRS-$\uparrow$) & $37.20$ \\
        mr-Langevin (LRS-2-$\uparrow$-HRS-9)&$\textbf{19.34}  $ \\
        mr-Langevin (LRS-3-$\uparrow$-HRS-8)&$30.50 $ \\
        mr-Langevin (LRS-4-$\uparrow$-HRS-7)&$38.99 $ \\
        mr-Langevin (LRS-5-$\uparrow$-HRS-6)&$38.21  $ \\
        mr-Langevin (LRS-6-$\uparrow$-HRS-5)&${35.85 }$\\
        mr-Langevin (LRS-7-$\uparrow$-HRS-4)&${ 37.98}$\\
        mr-Langevin (LRS-8-$\uparrow$-HRS-3)&${37.08 }$\\
        \bottomrule
        
    \end{tabular} 
\end{center}
\caption{\textbf{Ablation 3 change the upsampling method, using bi-cubic interpolation.} FID scores for using LRS for all x levels followed by upsampling using the super-resolution network  and $10-x+1$ noise levels of HRS sampling Langevin Sampling ( for e.g LRS-2$-\uparrow$-HRS-$9$), corresponding to running the first two largest noise levels in LR Langevin followed by upsampling and then run HR Langevin sampling starting from the second noise level (total of 9 noise levels in HR)} \label{tab:scoreablation3}
\end{table}

\subsection{Additional Plots}

\begin{figure}[ht!]
    \centering
 \includegraphics[width=\textwidth]{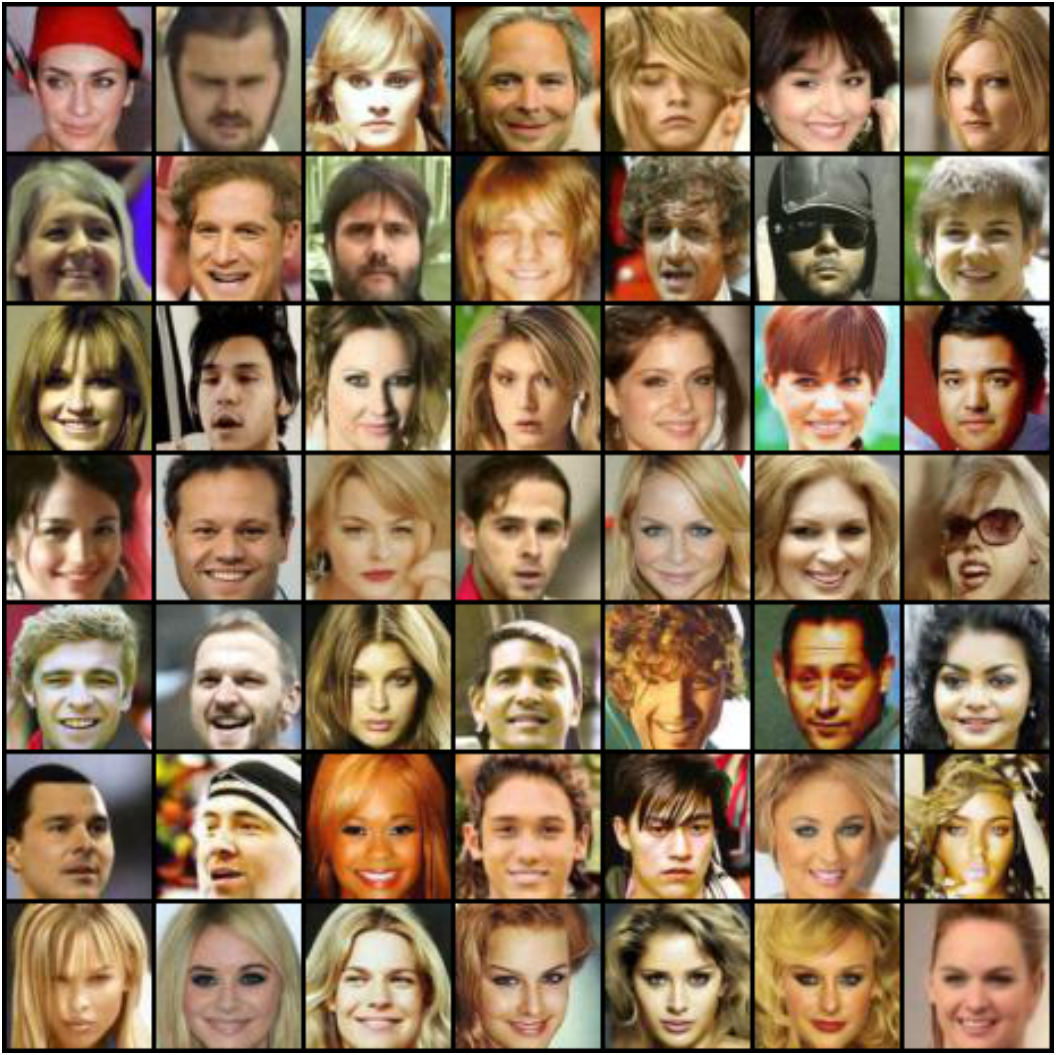}
    \caption{Random samples from HRS Langevin }
    \label{fig:my_label1}
\end{figure}
\begin{figure}[ht!]
    \centering
 \includegraphics[width=\textwidth]{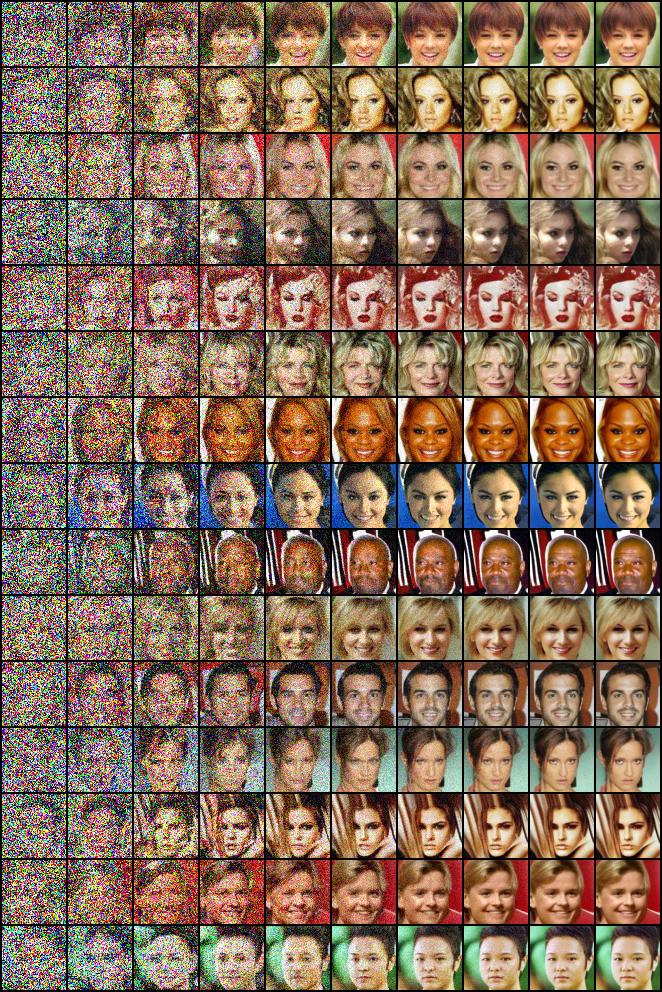}
    \caption{Trajectories  of  HRS Langevin }
    \label{fig:my_label2}
\end{figure}

\begin{figure}[ht!]
    \centering
 \includegraphics[width=\textwidth]{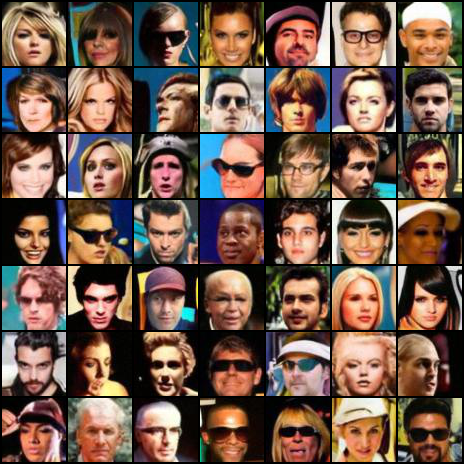}
    \caption{Random Samples from LRS-$\uparrow$-HRS-3 Langevin}
    \label{fig:my_label3}
\end{figure}

\begin{figure}[ht!]
    \centering
 \includegraphics[width=0.4\textwidth]{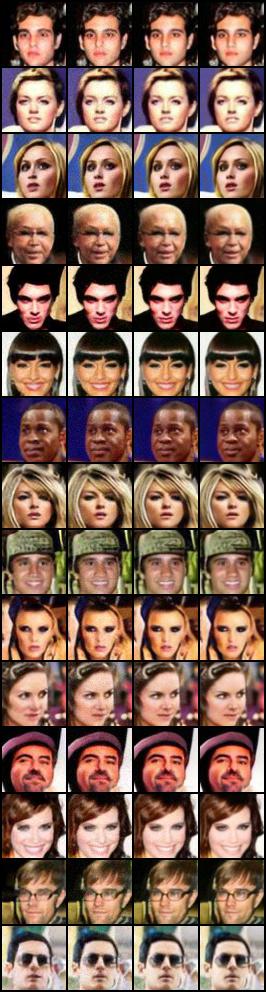}
    \caption{Trajectories of  LRS-$\uparrow$-HRS-3 Langevin, starting from the upsampling of 32x32 Low resolution annealed Langevin( first column is the output of the upsampling ) }
    \label{fig:my_label4}
\end{figure}

\begin{figure}[ht!]
    \centering
 \includegraphics[width=\textwidth]{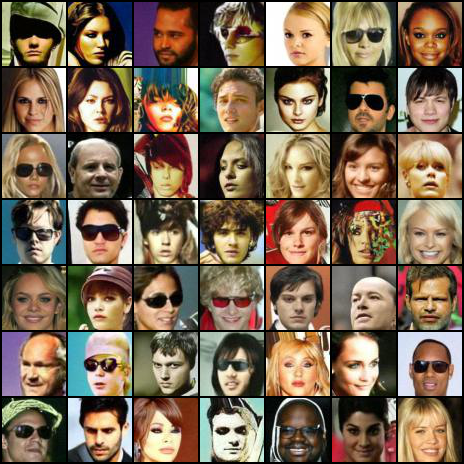}
    \caption{Random Samples from LRS-$\uparrow$-HRS-9 }
    \label{fig:my_label5}
\end{figure}
\begin{figure}[ht!]
    \centering
 \includegraphics[width=\textwidth]{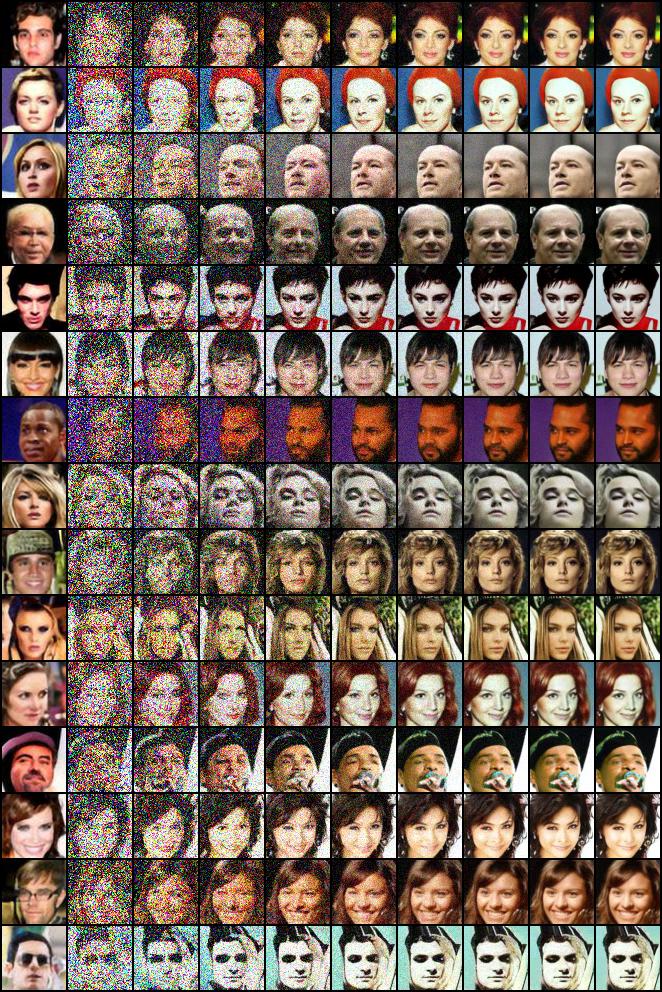}
    \caption{Trajectories of  LRS-$\uparrow$-HRS-9 Langevin, starting from the upsampling of 32x32 Low resolution annealed Langevin ( first column is the output of the upsampling )}
    \label{fig:my_label6}
\end{figure}

\begin{figure}[ht!]
    \centering
 \includegraphics[width=\textwidth]{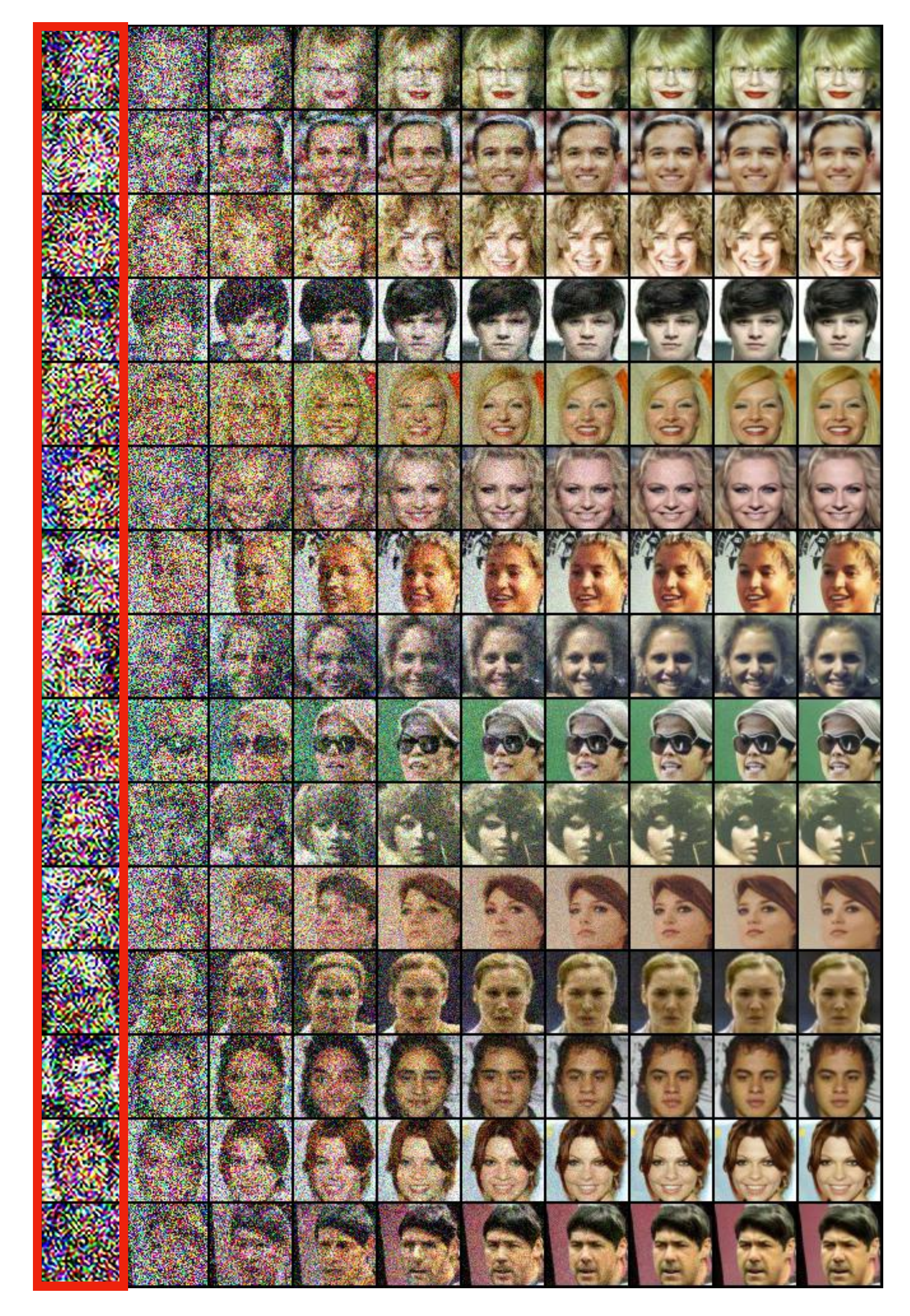}
    \caption{Trajectories of  LRS-2-$\uparrow$-HRS-9 Langevin, starting from the upsampling of 32x32 2 levels of Low resolution annealed Langevin  ( first column in red is the output of the upsampling )}
    \label{fig:my_label9}
\end{figure}

\end{document}